\documentclass[final,12pt]{colt2019} 


\title[On the Performance of Thompson Sampling on Logistic Bandits]{On the Performance of Thompson Sampling on Logistic Bandits}
\usepackage{times}
\usepackage{shi_dong_template}
\usepackage[stable]{footmisc}
\usepackage{changepage}

\def\shownotes{1}  
\ifnum\shownotes=1





\coltauthor{%
 \Name{Shi Dong} \Email{sdong15@stanford.edu}\\
 \Name{Tengyu Ma} \Email{tengyuma@stanford.edu}\\
 \Name{Benjamin {Van Roy}} \Email{bvr@stanford.edu}\\
 \addr Stanford University
}

\def\A{\mathcal{A}}
\def\B{\mathrm{B}}
\def\D{\lambda}

\def\hist{\mathcal{H}}
\def\hX{\hat{X}}
\def\hY{\hat{Y}}
\def\hU{\hat{U}}

\def\tY{\tilde{Y}}

\def\Var{\mathrm{Var}}
\def\ind{\mathbb{I}}
\def\phib{\phi_\beta}

\def\gbl{\gamma_{\beta,\lambda}}
\def\gbbl{\bar{\gamma}_{\beta,\lambda}}
\def\gby{\gamma_{\beta,\sigma(x)}}
\def\gbby{\bar{\gamma}_{\beta,\sigma(x)}}
\def\gbY{\gamma_{\beta,\sigma(X)}}
\def\gbbY{\bar{\gamma}_{\beta,\sigma(X)}}
\def\nb{\nu_\beta}
\def\TS{\mathrm{TS}}
\def\zbl{z_{\beta,\lambda}}

\def\wbl{w_{\beta,\lambda}}

\def\wbY{w_{\beta, \sigma(X)}}



\begin{document}

\maketitle
\footnote{Accepted for presentation at the Conference on Learning Theory (COLT) 2019.}
\def\FOOTNOTE{In defining ``log-odds,'' we use base $e^\beta$ rather than $e$.  As a result, the ``log-odds'' throughout this article refers to $a^\T\theta$ instead of $\beta a^\T\theta$.}
\begin{abstract}%
We study the logistic bandit, in which rewards are binary with success probability $\exp(\beta a^\top \theta) / (1 + \exp(\beta a^\top \theta))$ and actions $a$ and coefficients $\theta$ are within the $d$-dimensional unit ball.  While prior regret bounds for algorithms that address the logistic bandit exhibit exponential dependence on the slope parameter $\b$, we establish a regret bound for Thompson sampling that is independent of $\b$.  Specifically, we establish that, when the set of feasible actions is identical to the set of possible coefficient vectors, the Bayesian regret of Thompson sampling is $\t{O}(d\sqrt{T})$.  We also establish a $\t{O}(\sqrt{d\eta T}/\D)$ bound that applies more broadly, where $\D$ is the worst-case optimal log-odds\footnote{\FOOTNOTE} and $\eta$ is the ``fragility dimension,'' a new statistic we define to capture the degree to which an optimal action for one model fails to satisfice for others.  We demonstrate that the fragility dimension plays an essential role by showing that, for any $\e > 0$, no algorithm can achieve $\mathrm{poly}(d, 1/\D)\cdot T^{1-\e}$ regret.
\end{abstract}

\begin{keywords}%
  bandits, Thompson sampling, logistic regression, regret bounds.%
\end{keywords}

\section{Introduction}
In the \textit{logistic bandit} an agent observes a binary reward after each action, with outcome probabilities governed by 
a logistic function:
$$\mathbb{P}\left(\text{reward}=1 \Big| \text{action} = a\right) = \frac{e^{\b a^\top \theta}}{1+e^{\b a ^\top \theta}}.$$
Each action $a$ and parameter vector $\theta$ is a vector within the $d$-dimensional unit ball.  The agent initially knows the
scale parameter $\beta$ but is uncertain about the coefficient vector $\theta$.  The problem of learning to improve action selection over repeated interactions is sometimes referred to as the \textit{logistic bandit problem} or \textit{online logistic regression}.

The logistic bandit serves as a model for a wide range of applications. One example is the problem of personalized recommendation, in which a service provider successively recommends content, receiving only binary responses from users, indicating ``like'' or ``dislike.''
A growing literature treats the design and analysis of action selection algorithms for the logistic bandit.  Upper-confidence-bound (UCB) algorithms have been analyzed in \cite{filippi2010parametric,li2017provable,russo2013eluder}, while Thompson sampling (\cite{thompson1933likelihood}) was treated in \cite{russo2014posterior} and \cite{abeille2017linear}.  Each of these algorithms has been shown to converge on the optimal action with time dependence $\t{O}(1/\sqrt{T})$, where $\t{O}$ ignores poly-logarithmic factors.  However, previous analyses leave open the possibility that the convergence time increases exponentially with the parameter $\b$, which seems counterintuitive.  In particular, as $\b$ increases, distinctions between good and bad actions become more definitive, which should make them easier to learn.\par
To shed light on this issue, we build on an information-theoretic line of analysis, which was first proposed in \cite{russo2016information} and further developed in \cite{bubeck2016multi} and \cite{dong2018information}.  A critical device here is the \textit{information ratio}, which quantifies the one-stage trade-off between exploration and exploitation.  The information ratio has also motivated the design of efficient bandit algorithms, as in \cite{russo2014learning}, \cite{russo2018satisficing} and \cite{liu2018information}. While prior bounds on the information ratio pertain only to independent or linear bandits, in this work we develop a new technique for bounding the information ratio of a logistic bandit.  This leads to a stronger regret bound and insight into the role of $\b$.
\begin{center}
\begin{table}
\begin{adjustwidth}{-0.7cm}{}
\begin{tabular}{ |c||c|c| } 
 \hline
 Algorithm & Regret Upper Bound & Notes \\ 
 \hline
 \begin{tabular}{c} GLM-UCB\\ (\cite{filippi2010parametric})\end{tabular} & $O\left(e^\b\cdot d\cdot T^{1/2}\log^{3/2}T \right)$ & Frequentist bound. \\ 
 \hline
 \begin{tabular}{c} A variation of GLM-UCB\\(\cite{russo2013eluder})\end{tabular} & $O\left(e^\b \log \b\cdot d\cdot T^{1/2}\right)$ & Bayesian bound. \\ 
 \hline
 \begin{tabular}{c} SupCB-GLM\\ (\cite{li2017provable})\end{tabular} & $O\left(e^\b\cdot (d \log K)^{1/2}\cdot T^{1/2}\log T \right)$ & \begin{tabular}{c} Frequentist bound,\\ $K$ is the number of actions. \end{tabular} \\
 \hline
 \begin{tabular}{c} Thompson Sampling\\ (\cite{russo2014posterior})\end{tabular} & $O\left(e^\b\cdot d \cdot T^{1/2}\log^{3/2} T \right)$ & Bayesian bound.\\
 \hline
 \begin{tabular}{c} Thompson Sampling\\ (\cite{abeille2017linear})\end{tabular} & $O\left(e^\b\cdot d^{3/2}\log^{1/2}d \cdot T^{1/2}\log^{3/2} T \right)$ & Frequentist bound.\\
 \hline
 \begin{tabular}{c} {\bf Thompson Sampling}\\ {\bf (this work)}\end{tabular} & $O\left(\l^{-1}\cdot \big( d(\eta\lor d) \big)^{1/2} \cdot T^{1/2}\log^{1/2} T \right)$ & \begin{tabular}{c} Bayesian bound,\\ $\l$ and $\eta$ are independent of $\beta$\\ (defined in Section \ref{sec:main-results}).\end{tabular}\\
 \hline
\end{tabular}
\end{adjustwidth}
	\caption{Comparison of various results on logistic bandits. The upper bound in this work depends on $\b$-independent parameters $\l$ and $\eta$, defined in Assumption \ref{assp:main} and Definition \ref{def:DOI}, respectively. We use the notation $a\lor b=\max\{a,b\}$.}
\end{table}
\end{center}

\noindent\textbf{Our Contributions.}  Let $\A$ and $\Theta$ be the set of feasible actions and the support of $\theta$, respectively.
Under an assumption that $\A = \Theta$, we establish a $\t{O}(d \sqrt{T})$ bound on Bayesian regret.  This bound scales with the dimension $d$, but notably exhibits no dependence on $\beta$ or the number of feasible actions.  We then generalize this bound, relaxing the assumption that $\A = \Theta$ while introducing dependence on two statistics of the these sets: the {\it worst-case optimal log-odds} $\lambda = \min_{\theta \in \Theta} \max_{a \in \A} \a^\top \theta$ and the {\it fragility dimension} $\eta$, which is the number of possible models such that the optimal action for each yields success probability no greater than 50\% for any other.  Assuming $\lambda > 0$, we establish a $\t{O}(\sqrt{d \eta T} / \lambda)$ bound on Bayesian regret.  We also demonstrate that the fragility dimension plays an essential role, as for any function $f$, polynomial $p$, and $\e>0$, any algorithm for the logistic bandit cannot achieve Bayesian regret uniformly bounded by $f(\D)p(d)T^{1-\e}$.  We believe that, although $\eta$ can grow exponentially with $d$, in most relevant contexts $\eta$ should scale at most linearly with $d$.

The assumption that the worst-case optimal log-odds are positive may be restrictive.  This is equivalent to assuming that the for each possible model, the optimal action yields more than 50\% probability of success.  However, this assumption is essential, since it ensures that the fragility dimension is well-defined.  When the worst-case optimal log-odds are negative, the geometry of action and parameter sets plays a less significant role than parameter $\b$, therefore we conjecture that the exponential dependence on $\b$ is inevitable.  This could be an interesting direction for future research.\\

\noindent\textbf{Notations.} Throughout this article, for integer $n$ we will use $[n]$ to denote the set $\{1,\dots,n\}$. We will also use $\BF{B}_d$ and $\BF{S}_{d-1}$ to denote the unit ball and the unit sphere in $\BB{R}^d$, respectively. 

\section{Problem Settings}
We consider \textit{Bayesian generalized linear bandits}, defined as a tuple $\C{L} = (\A, \Th, R, \phi, \rho)$, where $\C{A}$ and $\Th$ are the action and parameter set, respectively, $R$ is a stochastic process representing the reward of playing each action, $\phi$ is the link function, and $\rho$ is the prior distribution over $\Th$, which represents our prior belief of the groundtruth parameter $\th^*$. Throughout this article, to avoid measure-theoretic subtleties, we assume that both $\A$ and $\Th$ are finite subsets of $\BF{B}_d$. For simplicity, we assume that there exists a one-to-one mapping\footnote{Note that Thompson sampling does not consider actions that are not optimal for any parameter. If an action is optimal for multiple parameters, we can add identical copies of the action to the action set such that the mapping between each parameter and the corresponding optimal action is one-to-one.} between each parameter and the corresponding optimal action. Specifically, let $\C{A}=\{a^1, \dots, a^N\}$ and $\Th=\{\th^1, \dots, \th^N\}$, with
\[
     \argmax_{a\in\C{A}} \E \left[R(a) | \th^* = \th^i\right] = \{a_i\},\quad \forall i=1,\dots, N.
\]
To specify the one-to-one mapping, for each $\th\in\Th$ we define $\a(\th)$ to be the unique action that maximizes $\E[R(a)|\th^*=\th]$. Letting $A^*$ be the optimal action, which is a random variable under our Bayesian setting, naturally we have $A^* = \a(\th^*)$. \par 
The reward $R$ is related to the inner product between the action and the parameter by the link function $\phi$, as
\[
\E \left[ R(a) | \th^* = \th \right] = \phi(a^\T\th),\quad \forall a\in\C{A}, \th\in\Th.
\]
Specifically, in logistic bandits, the reward $R$ is the binary process $R_\B$ and the link function is given by
\[
\phib(x) = \frac{e^{\b x}}{1 + e^{\b x}},
\]
where $\b>0$ is a parameter that characterizes the ``separability'' of the model. Equivalently, conditioned on $\th^*=\th$, $R_\B(a)$ is a Bernoulli random variable with mean $\phib(a^\T\th)$. In the following, we will use $\C{L}_\b$ to denote the logistic bandits problem instance with parameter $\b$.\par 
\sloppy At stage $t$ the agent plays action $A_t$ and observes reward $R_t = R(A_t)$. Let $\hist_t = \s(A_1, R_1, \dots, A_t, R_t)$ be the $\s$-algebra generated by the past actions and observations (rewards). A (randomized) \textit{policy} $\pi=(\pi_1,\pi_2,\dots)$ is a sequence of functions such that for each $t$, $\pi_t(\hist_{t-1})$ is a probability distribution on the action set. The performance of policy $\pi$ on problem instance $\C{L} =  (\A, \Th, R, \phi, \rho)$ is evaluated by the \textit{Bayesian regret}, defined as
\begin{equation}
    \label{eq:defRegret}
    \RM{BayesRegret}(T;\C{L}, \pi) := \E_{\pi,\rho}\left[\sum_{t=1}^T R^* - R_t \right],
\end{equation}
where $R^*:=R(\th^*)$, the subscripts $\pi,\rho$ denote that $A_t$ is drawn from $\pi_t(\hist_{t-1})$ for $t\geq 1$ and $A_0$ is drawn from the prior $\rho$. 
In this work, we are interested in the \textit{Thompson sampling policy} $\pi^{\TS}$, characterized as
\begin{equation}
    \label{eq:defTS}
    \P\left( \pi_t^{\TS}(\hist_{t-1})\in\cdot\ |\ \hist_{t-1}\right) = \P(A^*\in\cdot\ |\ \hist_{t-1}),
\end{equation}
i.e. the action played in each stage is drawn from the posterior of the optimal action. Since there is a one-to-one mapping between each parameter and the corresponding optimal action, the Thompson sampling policy can be equivalently carried out by sampling from the posterior of the true parameter $\th^*$ at each stage, and acting greedily with respect to the sampled parameter.

\section{Main Results}
\label{sec:main-results}

We start off the section with a regret bound that only depends on dimension $d$ and the number of time steps $T$, for the particular setting where the action set $\A$ is the same as the parameter set $\Th$. 
\begin{theorem}
    \label{thm:Clean}
    For any $\b>0$, if $\C{L}_\b=(\A,\Th,R_\B,\phib,\rho)$ is such that $\A,\Th\subset\BF{S}_{d-1}$ and $\A=\Th$, then
    \[
        \RM{BayesRegret}(T; \C{L}_\b, \pi^\TS) \leq 40d\sqrt{T \log \left( 3 + \frac{3\sqrt{2T}}{2d}\right)}.
    \]
\end{theorem}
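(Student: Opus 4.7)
The plan is to use the information-theoretic framework of Russo and Van Roy (2016), combined with the rate-distortion refinement of Dong and Van Roy (2018) that allows replacing $A^*$ by a finite-valued proxy $\hat A$. Setting $p_t(i) := \P(A^* = a^i \mid \hist_{t-1})$ and defining the conditional information ratio
\[
\Gamma_t := \frac{(\E_t[R^* - R_t])^2}{I_t(\hat A;\, (A_t, R_t))},
\]
Cauchy--Schwarz together with the chain rule for mutual information gives
\[
\RM{BayesRegret}(T;\C{L}_\b,\pi^{\TS}) \;\le\; \sqrt{T \cdot \bar\Gamma \cdot H(\hat A)} \;+\; \sum_{t=1}^T \E\bigl[\phi_\b(1) - \phi_\b(\hat A^{\T}\th^*)\bigr]
\]
for any almost-sure upper bound $\bar\Gamma$ on $\Gamma_t$. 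The two tasks are (i) choosing $\hat A$ so that the distortion term is $O(1)$ while $H(\hat A) = O(d\log T)$, and (ii) proving the information-ratio bound $\bar\Gamma = O(d)$ \emph{uniformly in $\b$}.

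For (i), since $A^* \in \BF{S}_{d-1}$, I would take $\hat A$ to be the projection of $A^*$ onto an $\epsilon$-cover of the sphere of cardinality at most $(3/\epsilon)^d$, so that $H(\hat A) \le d\log(3/\epsilon)$. Under $\A = \Th$, the optimal log-odds is $a^{\T}\th^* = 1$, which sits at the saturating end of the sigmoid where $\phi_\b$ is essentially \emph{flat}: the slope $\phi_\b'(1) = \b\,\sigma(\b)(1-\sigma(\b))$ is bounded by an absolute constant uniformly in $\b$, so $\phi_\b(1) - \phi_\b(1-\epsilon^2/2) = O(\epsilon^2)$ with no $\b$ factor. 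Choosing $\epsilon \asymp 1/\sqrt{T}$ then makes the cumulative distortion $O(1)$ and accounts for the $\log(3 + 3\sqrt{2T}/(2d))$ factor in the stated bound.

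For (ii), the heart of the argument, I would use the bijection $a^i = \th^i$ induced by $\A = \Th$ to express everything in terms of pairwise Gram entries $a^{i\T}a^j \in [-1,1]$:
\[
\E_t[R^* - R_t] = \sum_{i,j} p_t(i)p_t(j)\bigl(\phi_\b(1) - \phi_\b(a^{i\T}a^j)\bigr),
\]
\[
I_t \;\ge\; 2\sum_{j}p_t(j)\sum_i p_t(i)\bigl(\phi_\b(a^{i\T}a^j) - \bar q_j\bigr)^2
\]
(using Pinsker's inequality on the Bernoulli KL, with $\bar q_j := \sum_k p_t(k)\phi_\b(a^{k\T}a^j)$). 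In the linear-bandit case the residual matrix $M_{ij} := \phi_\b(a^{i\T}a^j) - \bar q_j$ has rank $\le d$, and the trace-type inequality $(\sum_i p_t(i)M_{ii})^2 \le d\sum_{ij} p_t(i)p_t(j)M_{ij}^2$ immediately yields $\Gamma_t \le d/2$. My plan for the logistic link is to partition index pairs $(i,j)$ by whether $a^{i\T}a^j$ is close to $1$ or bounded away: on the ``near optimal'' part, flatness of $\phi_\b$ at $1$ contributes $\b$-free quadratic bounds on both the regret and variance contributions; on the ``well-separated'' part, I would replace Pinsker with a variance-weighted refinement that exploits the identity $\phi_\b'(x) = \b\,\phi_\b(x)(1-\phi_\b(x))$ to cancel the apparent $\b$ of a Lipschitz estimate against the $\phi_\b(x)(1-\phi_\b(x))$ factor in the KL denominator.

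The main obstacle is exactly this last step: a naive Lipschitz reduction costs $\sup\phi_\b' = \b/4$, which is precisely the source of the $e^\b$ loss in every prior analysis. The success of the proof hinges on exploiting that the inner products $a^{i\T}a^j \in [-1,1]$ avoid the steep central region of $\phi_\b$ whenever $\b$ is large, together with the cancellation afforded by the logistic identity above. Once $\bar\Gamma = O(d)$ is established, combining with $H(\hat A) = O(d\log T)$ and optimizing constants yields the stated bound $40 d\sqrt{T\log(3 + 3\sqrt{2T}/(2d))}$.
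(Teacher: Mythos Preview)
Your part (i) --- using the rate--distortion refinement with an $\epsilon$-cover of $\BF{S}_{d-1}$ --- is essentially what the paper's Proposition~\ref{prop:previousLogistic} packages, and the observation that $\phi_\b'(1)$ is bounded uniformly in $\b$ is correct and accounts for the $\log$ term. So the outer shell of the argument is fine.

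The gap is in (ii). Your last paragraph is candid about this, but the proposed resolution does not work. The claim that ``the inner products $a^{i\T}a^j\in[-1,1]$ avoid the steep central region of $\phi_\b$ whenever $\b$ is large'' is simply false: nothing about $\A=\Th\subset\BF{S}_{d-1}$ prevents $a^{i\T}a^j$ from sitting exactly at $0$, and indeed generically many pairs will. Likewise, the ``variance-weighted refinement of Pinsker'' you sketch cannot by itself cancel the $\b$: for pairs with $a^{i\T}a^j$ near $0$ the regret contribution is $\phi_\b(1)-\phi_\b(0)\approx 1/2$ while the corresponding KL term is $O(1)$, so no local derivative identity helps. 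Partitioning by the value of $a^{i\T}a^j$ is therefore the wrong decomposition.

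What the paper does instead is partition \emph{actions} $x$ by whether the conditional expected regret $\nb(x)=\E[\gbY(\s(X)-X^\T\hY)\mid X=x]$ is below or above a threshold. On the ``small $\nb$'' piece the paper replaces $\gbl$ by a monotone surrogate $\gbbl$ (linear beyond the maximizer $\zbl$ of $\gbl(\zeta)/\zeta$) and invokes a variance--monotonicity lemma (Lemma~\ref{lemma:gbb}) to reduce to the rank-$d$ linear argument; this is where the $d$ appears. On the ``large $\nb$'' piece the mechanism is not analytic at all but combinatorial: because $\A=\Th$, at most $d+1$ vectors in $\BF{S}_{d-1}$ can have pairwise negative inner products (the fragility dimension), and a Tur\'an-type bound (Proposition~\ref{prop:strongerTuran}) gives $\P(\hX^\T\hY\ge 0)\ge 1/(2(d+1))$. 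This lower-bounds the variance term on $\bar{\C{D}}$ directly, with no Lipschitz estimate and hence no $\b$. Your proposal is missing both of these ideas --- the monotone surrogate with the variance lemma, and the Tur\'an/fragility-dimension step --- and without the latter in particular there is no route to a $\b$-free bound on the information ratio.
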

Despite nonlinearity of the link function, Theorem \ref{thm:Clean} matches the $\t{O}(d\sqrt{T})$ bound for linear bandits. 
It is worth noting that the this bound has no dependence on $\b$ or the number of arms, and also matches the $\Omega(d\sqrt{T})$ minimax lower bound for linear bandits in \cite{dani2008stochastic}, ignoring a $\sqrt{\log T}$ factor. This result shows that if there exists an action that aligns perfectly with each potential parameter, the performance of Thompson sampling only depends on the problem dimension $d$, and the dependence is at most linear. 

However, as our next result shows, if the parameters do not align perfectly with their corresponding optimal actions, we have to introduce the fragility dimension to characterize the difficulty of the problem.\par
For our general result, we assume that the following assumption holds.
\begin{assumption}
    \label{assp:main}
    There exists constant $\D\in[0,1]$ such that for every $\th\in\Th$ there is $\a(\th)^\T\th\geq \D$.
\end{assumption}

For a given logistic bandit problem instance $\C{L}_\b=(\A,\Th,R,\phib,\rho)$ that satisfies Assumption \ref{assp:main}, we show that the Bayesian regret of Thompson sampling on $\C{L}_\b$ is closely related to its ``fragility dimension,'' a notion that we introduce below.

\begin{definition}
    \label{def:DOI}
    For any given pair of (possibly infinite) subsets $(\C{X},\C{Y})$ of $\BF{B}_d$, the fragility dimension, denoted by $\eta(\C{X},\C{Y})$, is defined as the largest integer $M$, such that there exists $\{y_1,\dots, y_M\}\subseteq\C{Y}$, with
    \[
        f^*(y_i)^\T y_j < 0,\quad \forall i,j\in[M], i\neq j,
    \]
    where $f^*(y) := \argmax_{x\in\C{X}} x^\T y$. 
    \sloppy The fragility dimension of a problem instance $\C{L}_0=(\A_0,\Th_0,R_0,\phi_0,\rho_0)$ is defined as the fragility dimension of $(\A_0,\Th_0)$, and is denoted by $\eta(\C{L}_0)$.
\end{definition}
\begin{example}
    \label{exp:DOI}
    If the action set and the parameter set of $\C{L}$ are identical subsets of $\BF{S}_{d-1}$, then for each $\th\in\Th$, there is $\a(\th)=\th$. We will show in Appendix \ref{sec:bounds_of_DOI_linearRegime} that in $\BF{S}_{d-1}$ there exists at most $d+1$ vectors with pairwise negative inner products. Therefore, the fragility dimension is bounded by
    \[
        \eta(\C{L}) \leq d+1.
    \]
\end{example}
\begin{remark}
    \label{remark:DOIBound}
    Obviously the fragility dimension cannot exceed the cardinality of the action (parameter) set.  We will show in Appendix \ref{sec:bounds_of_DOI} that we can upper bound the worst-case fragility dimension by the dimensionality $d$ and the constant $\D$ in Assumption \ref{assp:main}. Roughly speaking,
    \begin{itemize}
        \item If $\C{L}$ is such that $\D=1$, then $\eta(\C{L})\leq d+1$ (cf. Example \ref{exp:DOI});
        \item For any fixed $\D\in(0,1)$, if we only consider problem instances such that Assumption \ref{assp:main} holds with constant $\D$, then the worst-case fragility dimension grows exponentially with $d$.
        \item For any $d\geq 3$, we can find a problem instance $\C{L}$ such that Assumption \ref{assp:main} holds with constant $\D=0$, whose fragility dimension is arbitrarily large.
    \end{itemize}
\end{remark}
\begin{remark}
    \label{remark:DOI}
    For given finite action and parameter sets $\A$ and $\Th$, we can think of each parameter as a vertex in a graph $\C{G}$. Two vertices $i$ and $j$ of $\C{G}$ are connected by an edge if and only if
    \[
        \a(\th_i)^\T\th_j < 0\text{ and }\a(\th_j)^\T\th_i < 0.
    \]
    Thus determining the fragility dimension of $(\A,\Th)$ is equivalent to finding the maximum clique in $\C{G}$. This is a widely studied NP-complete problem and there exists a number of efficient heuristics, see \cite{tarjan1977finding}, \cite{tomita2007efficient} and references therein.
\end{remark}
The following general result for the performance of Thompson sampling gives a $\t{O}(\sqrt{d \eta T} / \lambda)$ regret bound.
\begin{theorem}
    \label{thm:main}
    For any $\b>0$, if $\C{L}_\b$ is such that Assumption \ref{assp:main} holds with $\D\in(0,1]$, then 
    \begin{equation}
        \label{eq:main}
        \RM{BayesRegret}(T; \C{L}_\b, \pi^\TS) \leq 20\D^{-1}\sqrt{2d\cdot(\eta(\C{L}_\b)\vee d) \cdot T \log \left( 3 + \frac{3\sqrt{2T}}{2d\D}\right)},
    \end{equation}
    where $a\vee b=\max\{a,b\}$.  It is worth noting that the fragility dimension only depends on the action and parameter sets of the problem instance, hence the right-hand side of \eqref{eq:main} has no dependence on $\beta$.
\end{theorem}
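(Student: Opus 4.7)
The strategy is to apply the information-theoretic regret decomposition pioneered by Russo and Van Roy, which states that
\[
\mathrm{BayesRegret}(T;\C{L}_\b,\pi^{\TS}) \leq \sqrt{\bar{\Gamma}\cdot H(A^*)\cdot T}
\]
for any almost-sure upper bound $\bar{\Gamma}$ on the per-step information ratio $\Gamma_t := (\E[R^*-R_t\mid\hist_{t-1}])^2 / I_t(A^*;(A_t,R_t))$, where $H(A^*)$ is the prior entropy of the optimal action. I would aim to establish $\bar{\Gamma} = O(d/\D^2)$ together with an effective entropy bound $H(A^*) = \tilde O(\eta(\C{L}_\b)\vee d)$, so that multiplying and taking the square root recovers the stated bound after absorbing a $\log$ factor into a discretization scale.

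The entropy side is where the fragility dimension enters. The crude bound $H(A^*)\leq \log|\C{A}|$ has no dimension control, so instead I would impose an $\e$-net on $\BF{S}_{d-1}$ of size $(3/\e)^d$, reduce the original instance to a discretized instance on this net (paying $O(T\e/\D)$ additional regret), and exploit Definition \ref{def:DOI}: in the discretized instance the number of optimal actions that are pairwise hostile is capped by the fragility dimension, while near-parallel parameters collapse to a shared optimal action. This should yield $H(A^*) \lesssim (\eta\vee d)\log(1/\e)$; optimizing $\e = \Theta(d\D/\sqrt{T})$ then controls the excess regret and leaves only the logarithmic factor $\log(3+3\sqrt{2T}/(2d\D))$ appearing in the theorem.

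Bounding the information ratio is the main technical obstacle, because prior linearization-based analyses incur a factor of $\max_x(\phi_\b'(x))^{-2}$ that is exponential in $\b$. I would avoid linearization entirely and exploit the fact that $R_t$ is Bernoulli: lower bound $I_t(A^*;(A_t,R_t))$ by the squared Hellinger (or total-variation) distance between $\mathrm{Bern}(\phib(A_t^\T\th^i))$ and $\mathrm{Bern}(\phib(A_t^\T\th^j))$ across candidate parameters $\th^i,\th^j$ in the posterior support. Under Assumption \ref{assp:main}, whenever Thompson sampling plays $A_t=\a(\tilde\th)$ for its sampled parameter $\tilde\th$, we have $A_t^\T\tilde\th\geq \D$, which produces a $\b$-independent gap between the two Bernoulli means and hence a $\b$-free lower bound on information gain in terms of the instantaneous regret. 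The delicate part, and what I expect to be hardest, is converting this pointwise gap into the form $\Gamma_t \leq O(d/\D^2)$ by a careful averaging over the posterior that mirrors the linear-bandit argument while remaining oblivious to the slope $\b$; this is precisely where the fragility-dimension structure has to interlock with the Bernoulli-Hellinger bound so that the worst-case information ratio depends only on $d$ and $\D$.
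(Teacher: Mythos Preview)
Your allocation of the two factors is inverted relative to what is actually provable, and the information-ratio claim $\bar\Gamma=O(d/\lambda^2)$ is in fact false. If it held uniformly, then combining it with the standard rate-distortion bound $H(A^*)\lesssim d\log(1/\epsilon)$ (which always holds after $\epsilon$-discretization, and which you yourself invoke) would already yield $\mathrm{BayesRegret}\lesssim (d/\lambda)\sqrt{T\log T}$ with no dependence on $\eta$ whatsoever. This directly contradicts Theorem~\ref{thm:lowerBound}: on the hard instance constructed there one has $\eta\sim\gamma^d$ and every algorithm suffers regret $\Omega(T)$ up to time $T\sim\gamma^{d}$, which a $\tilde O(d\sqrt{T}/\lambda)$ bound cannot absorb. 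Hence the per-step information ratio of Thompson sampling \emph{cannot} be bounded independently of the fragility dimension. Your Bernoulli/Hellinger heuristic does not circumvent this; the obstruction is structural, not a matter of choosing the right $f$-divergence.

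The entropy argument is also misconceived. The fragility dimension caps the size of a clique in the graph of Remark~\ref{remark:DOI}; it does not cap the number of distinct optimal actions in any discretization, so there is no mechanism by which ``near-parallel parameters collapse'' to leave only $O(\eta)$ cells. After discretizing, the only general entropy control available is the usual $d\log(1/\epsilon)$, and that is exactly what enters the final bound.

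In the paper the roles are reversed. One invokes the refined regret inequality of Proposition~\ref{prop:previousLogistic}, which already carries the factor $d$ from a rate-distortion argument, and then proves $\Gamma_t\le 100\,\lambda^{-2}(\eta\vee d)$. The $\eta$ enters the \emph{information-ratio} bound via the following device: fix the posterior and split $\mathcal{A}$ into a ``tame'' set $\mathcal{D}$ on which the conditional mean regret $\nu_\beta(x)$ is below a threshold, and its complement $\bar{\mathcal{D}}$. On $\mathcal{D}$ one constructs a monotone majorant $\bar\gamma_{\beta,\lambda}$ of the link-regret function so that a linear-bandit variance comparison (Lemma~\ref{lemma:marginals}) applies, yielding a contribution of order $d$. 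On $\bar{\mathcal{D}}$ the key step is Proposition~\ref{prop:strongerTuran}: since under Thompson sampling $A_t$ and $A^*$ are i.i.d., the probability that the sampled action lands in the non-hostile half-space $\{A_t^\top\theta^*\ge 0\}$ is at least $1/(2\eta)$, and this converts into a $\beta$-free lower bound on the conditional variance of $\phi_\beta(A_t^\top\theta^*)$ of order $1/\eta$. Taking the worse of the two pieces gives $\Gamma_t\lesssim(\eta\vee d)/\lambda^2$. Without the $\mathcal{D}/\bar{\mathcal{D}}$ split and the Tur\'an-type probability bound, the argument cannot be closed.
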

\begin{remark}
    Considering Example \ref{exp:DOI}, and noting that when $\A=\Th$, Assumption \ref{assp:main} holds with $\D=1$, we immediately arrive at Theorem \ref{thm:Clean}.
\end{remark}
\begin{remark}
    Interestingly, the fragility dimension is not monotonic with respect to the inclusion of sets, i.e. there exist sets $\C{X}_1, \C{X}_2, \C{Y}$, such that $\C{X}_1\subset\C{X}_2$ but $\eta(\C{X}_1,\C{Y}) > \eta(\C{X}_2,\C{Y})$. As we show in Appendix \ref{sec:harder}, this fact means that by reducing the size of the action set, we could arrive at a more difficult problem. This is a somewhat surprising result that is worth noting.
\end{remark}\par
We also show that the $\eta$ term in \eqref{eq:main} is critical, since for any fixed $\D<1$, there cannot exist an $\eta$-independent upper bound that is polynomial in $d$ and sublinear in $T$.
\begin{theorem}
    \label{thm:lowerBound}
    For any fixed $\D\in[0,1)$, let $f(\cdot)$ be any real function, $p(\cdot)$ be any polynomial and $\e>0$ be any constant. There exists a logistic bandit problem instance $\C{L}_\b$ and integer $T_0$ such that $\C{L}_\b$ satisfies Assumption \ref{assp:main} with constant $\D$ and 
    \begin{equation}
        \label{eq:thm:lowerBound}
        \RM{BayesRegret}(T_0; \C{L}_\b, \pi) \geq f(\D)p(d)\cdot T_0^{1-\e},
    \end{equation}
    for any policy $\pi$.
\end{theorem}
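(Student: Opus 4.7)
The plan is to build a bad instance by taking the fragility dimension $\eta$ to be enormous (doubly exponentially larger than any given $p(d)$ and $f(\Delta)$), and then reducing the resulting logistic bandit to a standard multi-armed Bernoulli bandit, where the $\Omega(\sqrt{NT})$ minimax lower bound can be applied.

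First, I would construct the instance. Given $\Delta \in [0, 1)$, $f$, $p$ and $\e$, pick a target fragility $N$ (to be determined). Using the constructions from the worst-case fragility appendix (summarized in Remark \ref{remark:DOIBound}), choose $d$ so that there exist $\theta_1,\dots,\theta_N \in \BF{B}_d$ and $a_1,\dots,a_N \in \BF{B}_d$ with $a_i = \a(\theta_i)$, $a_i^\T\theta_i \geq \Delta$, and $a_i^\T\theta_j \leq -\Delta'$ for all $i\neq j$, for some fixed $\Delta' > 0$. For $\Delta = 0$, one can take $d = 3$; for $\Delta > 0$, $d = O(\log N)$ suffices since the worst-case fragility dimension grows exponentially in $d$. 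Fix an arbitrary $\beta > 0$ (say $\beta = 1$), take $\A = \{a_i\}$, $\Th = \{\theta_i\}$, and set the prior $\rho$ to be uniform on $\Th$.

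Next I would carry out the MAB reduction. Conditioned on $\th^* = \theta_i$, playing $a_i$ yields a Bernoulli reward with mean at least $\phib(\Delta)$, while playing any $a_j$ with $j\neq i$ yields a Bernoulli with mean at most $\phib(-\Delta')$. Because the reward distribution for each wrong action is identical (depending only on whether $j = i$), the learner's observations are informationally equivalent to those in an $N$-armed Bernoulli bandit with a single "good" arm of mean $\phib(\Delta)$ and $N-1$ "bad" arms of mean $\phib(-\Delta')$, with uniform prior over which arm is good. Let $g := \phib(\Delta) - \phib(-\Delta') > 0$; note $g$ depends only on $\Delta$ and the fixed $\beta$. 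I would then invoke the standard $N$-armed Bernoulli bandit minimax lower bound (e.g., via Le Cam's method or the Auer--Cesa-Bianchi--Freund--Schapire argument), which, since it is proven by averaging over the hypothesis that is good, directly yields the Bayesian regret bound
\begin{equation*}
\RM{BayesRegret}(T;\C{L}_\b,\pi) \;\geq\; c_0\, g\,\sqrt{N T}
\end{equation*}
for all $T$ in the regime $T \leq N/g^2$ and for every policy $\pi$, where $c_0$ is an absolute constant.

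Finally I would calibrate $N$ and $T_0$. Set $T_0 = N$ (so the regime condition holds once $g^2 \leq 1$, which is always the case). The desired inequality $c_0\, g\, N \geq f(\Delta) p(d) T_0^{1-\e} = f(\Delta) p(d) N^{1-\e}$ is equivalent to $N^\e \geq f(\Delta) p(d)/(c_0 g)$. For $\Delta = 0$ we take $d = 3$, so $p(d) = p(3)$ is a constant and any $N$ with $N^\e \geq f(0)p(3)/(c_0 g)$ works. For $\Delta > 0$ we need to choose $N$ large enough to dominate $p(d)$ with $d \asymp \log N$; since $p$ is a polynomial, $p(d) = \mathrm{polylog}(N)$, which is dominated by $N^\e$ for all sufficiently large $N$. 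In either case a valid $N$ (and hence $T_0 = N$ and a matching instance) exists.

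The main obstacle is the lower bound step: one must verify that the standard minimax $\sqrt{NT}$ lower bound for Bernoulli bandits transfers cleanly to the Bayesian regret with the uniform prior we have chosen, and that the symmetry in the construction (all off-diagonal pairs have the same inner product $-\Delta'$) really does make all $N-1$ "wrong" arms indistinguishable. The symmetry can be enforced by a small perturbation of the extremal fragility construction, and the MAB lower bound transfers because it is originally proved via the uniform averaging argument that defines Bayesian regret here.
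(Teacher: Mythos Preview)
Your reduction to a symmetric multi-armed bandit breaks down because you fix $\beta$ independently of $N$. In the extremal fragility constructions behind Remark~\ref{remark:DOIBound}, the off-diagonal inner products $a_j^\T\theta_i$ are \emph{not} all equal: they vary with the pairwise inner products of the underlying sphere packing. With $\beta=1$ the wrong-arm means $\phib(a_j^\T\theta_i)$ therefore differ across $i$, so pulling a single arm $a_j$ repeatedly already leaks information about which $\theta_i$ is the truth; the instance is a structured (essentially linear) bandit, not an unstructured $N$-armed one, and the $\Omega(\sqrt{NT})$ MAB lower bound does not transfer. Your proposed fix --- perturb so that every off-diagonal inner product equals a common $-\Delta'$ --- is impossible for $N$ exponential in $d$: the $N\times N$ matrix $M$ with entries $M_{ji}=a_j^\T\theta_i$ would then equal $(\Delta+\Delta')I - \Delta' J$, which has rank at least $N-1$, while $\mathrm{rank}(M)\leq d$ since $a_j,\theta_i\in\BB{R}^d$. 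This forces $N\leq d+1$, and with $N=O(d)$ your calibration step $N^\e \geq f(\Delta)p(d)/(c_0 g)$ fails for any polynomial $p$ of positive degree.

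The paper closes exactly this gap by letting $\beta$ grow with $N$: it chooses $\beta$ large enough that $\phib(a_j^\T\theta_i)\leq 1/N$ for every wrong pair and $\phib(\a(\theta_i)^\T\theta_i)\geq 1-1/N$, which flattens all bad-arm means to essentially zero regardless of the precise inner products. No symmetry is then needed, and a direct counting argument (Proposition~\ref{prop:NoSublinear}) --- rather than a KL-based MAB bound --- gives Bayesian regret at least $t/4$ for all $t\leq N/2-1$. The calibration then proceeds with $N\geq\gamma^d$ exactly as you intended. The freedom to scale $\beta$ with the instance is precisely what the theorem statement permits and what your fixed-$\beta$ plan gives up.
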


\section{Main Devices in the proof of Theorem \ref{thm:main}}
\label{sec:3}
In this section we discuss the two main devices in the proof of Theorem \ref{thm:main}. In Section \ref{sec:3.1}, we introduce the notion of information ratio, and present the result that relates information ratio with Bayesian regret. In Section \ref{sec:3.2}, we highlight the role of fragility dimension. The full proof of Theorem \ref{thm:main} is given in Appendix \ref{sec:proof_of_thm_main}.\par 
\subsection{Information Ratio}
\label{sec:3.1}
To quantify the exploration-exploitation trade-off at stage $t$, for problem instance $\C{L}$ and policy $\pi$ we define the random variable \textit{information ratio} as the square of one-stage expected regret divided by the amount of information that the agent gains from playing an action and observing the reward, i.e.
\begin{equation}
    \label{eq:defInfoRatio}
    \Gamma_t(\C{L}, \pi) := \frac{\E_{t-1}\left[ R^* - R_t \right]^2}{I_{t-1}(A^*; A_t, R_t)},
\end{equation}
where the subscript $t-1$ in the right-hand side denotes evaluation under base measure $\P(\cdot|\hist_{t-1})$. If the information ratio is small at stage $t$, the agent executing the policy $\pi$ will only incur a large regret if she is about to acquire a large amount of information towards the optimal action. Past results have shown that, as long as the information ratio of Thompson sampling can be uniformly bounded, we immediately obtain a bound on the Bayesian regret of Thompson sampling.\par  
\begin{proposition}
\label{prop:previousLogistic}
\textbf{(Theorem 4, \cite{dong2018information})}
Let $\C{L}_\b=(\A, \Th, R, \phib, \rho)$ be any logistic bandit problem instance with $\inf_{\th\in\Th} |\a(\th)^\T\th| = \d>0$. Further assume that there exists constant $\bar{\Gamma}$ such that
\[
\Gamma_t(\C{L}_\b, \pi^\TS) \leq \bar{\G},\quad \text{a.s. }\forall t=1,2,\dots.
\]
Then we have
\[
    \RM{BayesRegret}(T;\C{L}_\b,\pi^{\RM{TS}})
    \leq \sqrt{8d\bar{\G}\cdot T\log\left( 3 + \frac{6\sqrt{2T}}{d}\cdot\frac{\b e^{\b\d}}{(1+e^{\b\d})^2} \right)}.
\]
\end{proposition}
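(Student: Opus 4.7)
The plan is to run the classical information-theoretic regret chain and then sharpen it via a rate--distortion surrogate. The hypothesis $\Gamma_t\le\bar\Gamma$ a.s.\ combined with Cauchy--Schwarz across $t$ and the chain rule of mutual information gives
\[
\RM{BayesRegret}(T;\C{L}_\b,\pi^{\TS}) \;\le\; \sqrt{\bar\Gamma\,T\cdot I(A^*;\hist_T)}\;\le\;\sqrt{\bar\Gamma\,T\cdot H(A^*)}.
\]
This is the point at which the usual proof would terminate by $H(A^*)\le\log|\A|$, but that bound is useless when $|\A|$ is unrelated to $d$, so I would route the analysis through a coarser random variable.

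Next, fix $\varepsilon>0$ and let $\A_\varepsilon\subset\BF{B}_d$ be a minimal $\varepsilon$-cover of the unit ball, with $|\A_\varepsilon|\le(3/\varepsilon)^d$; define $\hat A\in\A_\varepsilon$ to be the cover point nearest to $A^*$. Since $\hat A=g(A^*)$ for a deterministic $g$, one has $H(\hat A)\le d\log(3/\varepsilon)$, and the Thompson-sampling identity $A_t\mid\hist_{t-1}\stackrel{d}{=}A^*\mid\hist_{t-1}$ pushes forward to an identity between $g(A_t)$ and $\hat A$. Decomposing the per-stage regret as $R^*-R_t=\bigl(R(A^*)-R(\hat A)\bigr)+\bigl(R(\hat A)-R_t\bigr)$ and running the same Cauchy--Schwarz/chain-rule chain on the second summand with $\hat A$ as the target yields
\[
\RM{BayesRegret}(T)\;\le\;\sqrt{\bar\Gamma\,T\,d\log(3/\varepsilon)}\;+\;T\cdot\E\bigl|\phib(A^{*\T}\th^*)-\phib(\hat A^\T\th^*)\bigr|,
\]
so the remaining task is to bound the pointwise sigmoid distortion in the last display.

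The naive estimate using the global Lipschitz constant $\|\phib'\|_\infty=\beta/4$ reintroduces exponential-in-$\beta$ behaviour and wrecks the bound; sharpening it is what I expect to be the heart of the argument. Assumption $\inf_\th|\a(\th)^\T\th|=\delta$ forces $A^{*\T}\th^*\in\{x:|x|\ge\delta\}$, where the sigmoid has slope at most $\phib'(\delta)=\beta e^{\beta\delta}/(1+e^{\beta\delta})^2$. Choosing $\varepsilon\le\delta/2$ guarantees $\hat A^\T\th^*$ lies on the same flat side of the sigmoid, and a mean-value estimate on $[\delta/2,1]$ (and the symmetric interval) gives $|\phib(A^{*\T}\th^*)-\phib(\hat A^\T\th^*)|\le \phib'(\delta/2)\,\varepsilon$, which up to a constant is $\phib'(\delta)\,\varepsilon$. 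The delicate bookkeeping lies in handling the two sign cases symmetrically and in verifying that the cover perturbation cannot drag $\hat A^\T\th^*$ across the origin, where the sigmoid is steep; this is where the uniform-in-$\th$ margin $\delta$ is essential.

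Plugging the distortion bound back in gives
\[
\RM{BayesRegret}(T)\;\le\;T\,\phib'(\delta)\,\varepsilon\;+\;\sqrt{\bar\Gamma\,T\,d\log(3/\varepsilon)},
\]
and choosing $\varepsilon \asymp d/\bigl(\phib'(\delta)\sqrt{T}\bigr)$ equalises the two summands and reproduces both the log factor $\log\bigl(3 + 6\sqrt{2T}\,\phib'(\delta)/d\bigr)$ and the $8d$ multiplier visible in the statement, with the remaining constants absorbed in the Cauchy--Schwarz and covering steps.
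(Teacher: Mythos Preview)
The paper does not prove this proposition: it is quoted as Theorem~4 of \cite{dong2018information} and used as a black box. Your sketch has the right architecture---an $\varepsilon$-net on the unit ball, distortion controlled by the sigmoid slope away from the origin, optimisation over $\varepsilon$---but two steps fail as written.

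The central gap is the sentence ``running the same Cauchy--Schwarz/chain-rule chain on the second summand with $\hat A$ as the target.'' The hypothesis $\Gamma_t\le\bar\Gamma$ bounds the ratio whose denominator is $I_{t-1}(A^*;A_t,R_t)$; to extract $H(\hat A)$ rather than $H(A^*)$ from the chain rule you would need control of $\E_{t-1}[\,\cdot\,]^2 / I_{t-1}(\hat A;A_t,R_t)$, and since $I_{t-1}(\hat A;\,\cdot\,)\le I_{t-1}(A^*;\,\cdot\,)$ by data processing, the assumed inequality points the wrong way. Bridging this is precisely the substantive content of the rate--distortion machinery in the cited reference---one works with a surrogate ratio whose numerator is $\varepsilon$-shifted and whose denominator already carries the coarsened target, and argues separately that this surrogate is still dominated by $\bar\Gamma$---and that step is not a consequence of the bare hypothesis on $\Gamma_t$. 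A second, smaller gap: the claim that $\phib'(\delta/2)$ equals $\phib'(\delta)$ ``up to a constant'' is false, since
\[
\frac{\phib'(\delta/2)}{\phib'(\delta)}=\frac{e^{\beta\delta/2}(1+e^{\beta\delta})^2}{e^{\beta\delta}(1+e^{\beta\delta/2})^2}\sim e^{\beta\delta/2}\quad(\beta\to\infty),
\]
so using the mean-value bound on $[\delta/2,1]$ reinserts exponential-in-$\beta$ dependence into the very distortion term you introduced to suppress it. One needs either to keep $\hat A^\top\theta^*\ge\delta$ (not merely $\ge\delta/2$) or to carry the $\varepsilon$-dependent slope $\phib'(\delta-\varepsilon)$ explicitly through the final optimisation.
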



\subsection{Fragility Dimension}
\label{sec:3.2}
The one-stage expected regret can be written as
\begin{equation}
    \label{eq:oneStageRegret}
    \E_{t-1}[R^*-R_t] = \E_{t-1}[\phib((A^*)^\T\th^*) - \phib(A_t^\T\th^*)]
\end{equation}
It is worth noting that $A^*=\a(\th^*)$ and by the definition of Thompson sampling, $A^*$ and $A_t$ are independent and identically distributed.
Let's first consider the simple case where $\b=\infty$, which motivates our analysis. When $\b=\infty$, we have that $\phi_\b(x)=1$ for all $x \geq 0$ and $\phi_\b(x)=0$ for all $x<0$\footnote{For the sake of simplicity, we will assume that $\phi_\infty(0)=1$, while in fact $\lim_{\b\to\infty}\phib(0)=1/2$. The value of $\phi_\infty(0)$ does not play a role in our analysis.}. By Assumption \ref{assp:main}, we have
\begin{equation} 
    \phib((A^*)^\T\th^*) = \phib\big(\a(\th^*)^\T\th^*\big) = 1.
\end{equation}
There is also
\begin{equation}
    \E_{t-1}[\phib(A_t^\T \th^*)] = \P_{t-1}(A_t^\T \th^*\geq 0).
\end{equation}
Therefore, to upper bound the right-hand side of \eqref{eq:oneStageRegret}, we need to lower bound $\P_{t-1}(A_t^\T \th^*\geq 0)$. The proposition below shows that this term is connected critically with the fragility dimension of $(\A,\Th)$. The proof is given in Appendix \ref{sec:proof_of_prop_strongerTuran}.
\begin{proposition}
    \label{prop:strongerTuran}
    Let $\C{U},\C{V}$ be finite subsets of $\BF{B}_d$. Suppose that there exists bijection $f^*:\C{V}\mapsto\C{U}$ such that
    \[
        f^*(v)^\T v = \max_{u\in\C{U}} u^\T v,\quad\forall v\in\C{V},
    \]
    and $f^*(v)^\T v>0$ for all $v\in \C{V}$. Let $V$ be any random variable supported on $\C{V}$, $U=f^*(V)$ and $\hU$ be an iid copy of $U$. Then
    \begin{equation}
        \label{eq:strongerTuran}
        \P\big(\hU^\T V\geq 0\big) \geq \frac{1}{2\eta(\C{U},\C{V})}.
    \end{equation}
\end{proposition}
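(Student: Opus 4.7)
The plan is to reformulate the probability $\P(\hU^\T V \geq 0)$ in terms of a suitable undirected graph on $\C{V}$ whose clique number is exactly $\eta(\C{U},\C{V})$, and then invoke the Motzkin--Straus theorem together with a symmetrization trick to pick up the factor $1/(2\eta)$.

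First I would set up the graph. Define the undirected graph $H$ on vertex set $\C{V}$ by putting an edge between distinct $v, w \in \C{V}$ iff both $f^*(v)^\T w < 0$ and $f^*(w)^\T v < 0$. Unraveling the quantifiers in Definition \ref{def:DOI} shows that a size-$M$ clique in $H$ is exactly a witnessing family for $\eta(\C{U},\C{V}) \geq M$, so $\omega(H) = \eta(\C{U},\C{V})$. Let $p_v = \P(V=v)$.

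Next I would introduce the symmetrization. Let $\hat V$ be an i.i.d.\ copy of $V$ and consider the two events
\[
    A = \{f^*(\hat V)^\T V \geq 0\}, \qquad B = \{f^*(V)^\T \hat V \geq 0\}.
\]
By construction $\P(\hU^\T V \geq 0) = \P(A)$, and since $(V,\hat V)$ and $(\hat V, V)$ have the same law, $\P(A) = \P(B)$. The crucial observation is that $A^c \cap B^c$ decodes cleanly through $H$: when $V = \hat V$ we have $f^*(V)^\T V > 0$ by hypothesis so $A^c$ fails, and when $V \neq \hat V$ the conjunction $A^c \cap B^c$ is precisely the condition that $\{V, \hat V\}$ is an edge of $H$. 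Therefore
\[
    \P(A^c \cap B^c) \;=\; \sum_{\substack{v \neq w \\ \{v,w\} \in E(H)}} p_v p_w \;=\; 2 \sum_{\{v,w\} \in E(H)} p_v p_w.
\]

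Finally, apply the Motzkin--Straus theorem, which states that for any graph $G$ and any probability vector $x$, $\sum_{\{i,j\} \in E(G)} 2 x_i x_j \leq 1 - 1/\omega(G)$. Applied to $H$ with $x = p$ this yields $\P(A^c \cap B^c) \leq 1 - 1/\eta(\C{U},\C{V})$. Combining with $\P(A) = \P(B)$ and a trivial union bound,
\[
    2\P(A) \;\geq\; \P(A) + \P(B) \;\geq\; \P(A \cup B) \;=\; 1 - \P(A^c \cap B^c) \;\geq\; \frac{1}{\eta(\C{U},\C{V})},
\]
which is \eqref{eq:strongerTuran}.

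The conceptual obstacle, and the reason for the factor of $2$ in the denominator, is that the ``bad pair'' relation $(v,w) \mapsto \mathbb{1}[f^*(w)^\T v < 0]$ is inherently \emph{asymmetric}: directly applying Motzkin--Straus to the directed bad-pair graph would require controlling asymmetric edges (which need not correspond to $H$-edges), and a tournament-style example shows no clean bound in terms of $\eta$ alone is possible there. Replacing $A$ by $A \cup B$ symmetrizes the situation at the cost of a factor of $2$, and is exactly what lets Motzkin--Straus bite through the fragility dimension rather than some larger combinatorial quantity.
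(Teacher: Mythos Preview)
Your proposal is correct and follows essentially the same route as the paper: both build the undirected graph on $\C{V}$ with edges $\{v,w\}$ iff $f^*(v)^\T w<0$ and $f^*(w)^\T v<0$, identify its clique number with $\eta(\C{U},\C{V})$, symmetrize to reduce the asymmetric ``bad pair'' indicator to the edge-weight quadratic form, and then bound that form by $1-1/\eta$. The only substantive difference is that the paper re-derives the Motzkin--Straus bound from scratch via a mass-shifting argument (and notes this recovers Tur\'an's theorem as a byproduct), whereas you invoke Motzkin--Straus as a black box; your union-bound packaging of the symmetrization is also a bit cleaner than the paper's explicit sum-splitting, but the underlying inequality is identical.
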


\section{Proof Sketch of Theorem \ref{thm:lowerBound}}
\label{sec:4}
Recall that we can obtain regret bounds for linear bandits that are dependent only on the dimensionality of the problem $d$ rather than the number of actions (such as the one in \cite{russo2016information}).  The reason behind such bounds is that when the link function $\phi$ is linear, the difference between the mean rewards of two actions that are close to each other is always small. However, in logistic bandit problems, when parameter $\b$ is large, we could run into cases where two close actions yield diametrically different rewards, as is illustrated in Figure \ref{fig:linear_and_logistic}.\par
\begin{figure}
    \centering
    \includegraphics[height=150pt]{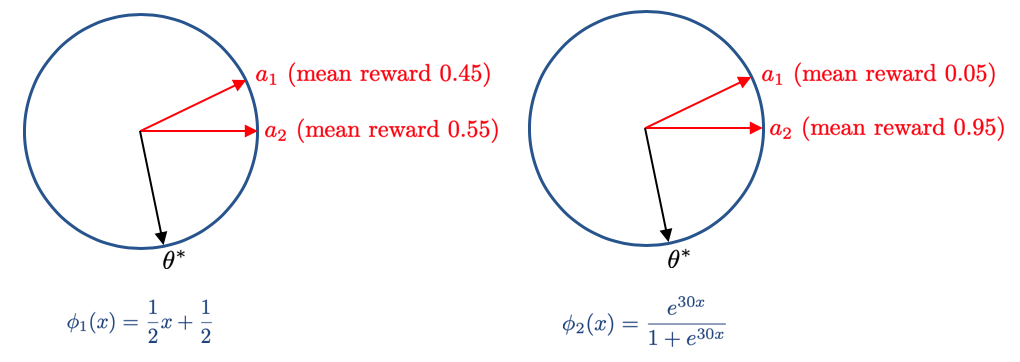}
    \caption{The difference between linear and logistic bandits. The actions $a_1$ and $a_2$ are ``similar'' to each other in that their embeddings in the Euclidean space are close. Under the linear link function $\phi_1$, the mean rewards of $a_1$ and $a_2$ are also similar. However, under the logistic link function $\phi_2$, the performances of the two actions are diametrical.} 
    \label{fig:linear_and_logistic}
\end{figure}
Specifically, suppose that our action and parameter sets are such that
\begin{equation}
    \label{eq:worst1}
    \a(\th)^\T\th \geq 0,\quad \forall \th\in\Th,
\end{equation}
and
\begin{equation}
    \label{eq:worst2}
    a^\T\th < 0,\quad \forall a\in\A, \th\in\Th, a\neq\a(\th),
\end{equation}
that is, $\eta(\A,\Th)=|\A|=|\Th|$. 
Then, when $\b$ is large, conditioned on each parameter being the true parameter, there is exactly one action with mean reward close to 1, while the mean rewards of all other actions are close to 0. The following proposition shows that in this problem the optimal action is inherently hard to learn, in the sense that the regret of any algorithm grows linearly in the first $|\A|/2-1$ stages. The proof can be found in Appendix \ref{sec:proof_of_prop_NoSublinear}.
\begin{proposition}
    \label{prop:NoSublinear}
    Let $\C{L}=(\A, \Th, R, \phi,\rho)$ be a generalized linear bandit problem such that $|\A|=N<\infty$, $R$ is binary and $\rho$ is the uniform distribution over $\A$. Suppose that for each $a\in\A$,
    \[
        \E[R(a)|A^*=a] \geq 1-\frac{1}{N},
    \]
    and 
    \[
        \max_{a'\neq a} \E[R(a')|A^*= a] \leq \frac{1}{N}.
    \]
    Then for any policy $\pi$,
    \begin{equation}
       \RM{BayesRegret}(t;\C{L}, \pi) \geq \frac{t}{4},\quad \forall t\leq \frac{N}{2}-1.
    \end{equation}
\end{proposition}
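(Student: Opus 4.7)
\textbf{Proof proposal for Proposition~\ref{prop:NoSublinear}.} The plan is to lower-bound Bayesian regret by two reductions: first to the misidentification probability $\P(A_s \neq A^*)$, and then to the ``coverage'' probability $\P(A^* \in \{A_1,\ldots,A_s\})$. The first reduction is immediate from the hypothesis on rewards: for every pair $i\neq j$,
\[
    \E[R(a^i)\mid A^*=a^i] - \E[R(a^j)\mid A^*=a^i] \geq (1-1/N) - 1/N = 1 - 2/N,
\]
so
\[
    \RM{BayesRegret}(t) \;\geq\; (1-2/N)\sum_{s=1}^t \P(A_s \neq A^*).
\]
Since $t \leq N/2-1$ forces $N \geq 2(t+1) \geq 4$ whenever $t\geq 1$ (the $t=0$ case is trivial), we have $1-2/N \geq 1/2$, and it suffices to show $\sum_{s=1}^t \P(A_s\neq A^*) \geq t/2$.

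For the second reduction, note that $\{A^* \notin \{A_1,\ldots,A_s\}\} \subseteq \{A_s \neq A^*\}$, so it is enough to establish the coverage bound
\[
    \P(A^* \in \{A_1,\ldots,A_s\}) \;\leq\; \frac{2s}{N}.
\]
Granting this, $\sum_{s=1}^t \P(A_s\neq A^*) \geq \sum_{s=1}^t (1 - 2s/N) = t - t(t+1)/N \geq t/2$, where the last step uses $t+1\leq N/2$; combined with the first reduction this yields $\RM{BayesRegret}(t)\geq t/4$.

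The heart of the argument is the coverage bound, which I would prove by a coupling with the \emph{all-zero trajectory}. Condition on the agent's internal randomization so the policy becomes a deterministic function of the reward history, and define $(\tilde A_1,\ldots,\tilde A_s)$ as the sequence of actions the agent would play if every observed reward equaled $0$. This sequence depends only on the policy, not on $A^*$, and occupies at most $s$ distinct arms. Write $\P_i(\cdot):=\P(\cdot \mid A^*=a^i)$ and $E:=\{R_1=\cdots=R_s=0\}$. On $E$, the realized actions coincide with $(\tilde A_1,\ldots,\tilde A_s)$, so the chain rule gives
\[
    \P_i(E) \;=\; \prod_{r=1}^s \bigl(1 - \E[R(\tilde A_r)\mid A^*=a^i]\bigr).
\]
If $a^i\notin\{\tilde A_1,\ldots,\tilde A_s\}$, every factor is at least $1-1/N$, hence $\P_i(E) \geq (1-1/N)^s \geq 1 - s/N$; moreover, on $E$ the agent never plays $a^i$, so $\{a^i \in \{A_1,\ldots,A_s\}\}\subseteq E^c$ and $\P_i(a^i \in \{A_1,\ldots,A_s\}) \leq s/N$. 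For the at most $s$ indices $i$ with $a^i\in\{\tilde A_1,\ldots,\tilde A_s\}$ use the trivial bound $\leq 1$. Summing and using the uniform prior,
\[
    \P(A^* \in \{A_1,\ldots,A_s\}) \;=\; \frac{1}{N}\sum_{i=1}^N \P_i(a^i\in\{A_1,\ldots,A_s\}) \;\leq\; \frac{1}{N}\Bigl(s + (N-s)\cdot\frac{s}{N}\Bigr) \;\leq\; \frac{2s}{N},
\]
and averaging over the agent's randomization (independent of $A^*$) preserves the bound.

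The main obstacle is precisely the coupling step: one must verify that, conditional on $\{R_1=\cdots=R_{r-1}=0\}$ under $\P_i$, the $r$-th action equals $\tilde A_r$ deterministically, which legitimizes the chain-rule expansion of $\P_i(E)$ that drives the entire proof. Once that is in place, the remaining inequalities reduce to Bernoulli's inequality and a size-$s$ counting argument over the distinct arms in the all-zero trajectory.
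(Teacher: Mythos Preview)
Your argument is correct. Both proofs exploit the same core observation---that under the ``all-zeros'' reward trajectory the agent follows a fixed action sequence, and if $A^*$ lies outside that sequence then all-zero rewards occur with probability at least $(1-1/N)^s$---but the way this is converted into a regret bound differs. The paper works with the single event $\mathcal{E}_t=\{R_1=\cdots=R_t=0\}$, bounds $\P(\mathcal{E}_t)\ge(1-1/N)^t(1-t/N)$ uniformly over action sequences, and then uses $\E[R_j\mid\mathcal{E}_t]=0$ together with $R_j\le 1$ to get
\[
\mathrm{BayesRegret}(t)\;\ge\; t\,\E[R^*]-t\,\P(\mathcal{E}_t^c)\;\ge\; t\bigl[(1-1/N)^t(1-t/N)-1/N\bigr],
\]
which is at least $t/4$ for $t\le N/2-1$. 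Your route instead passes through the per-step misidentification probability $\P(A_s\neq A^*)$ and the coverage bound $\P(A^*\in\{A_1,\dots,A_s\})\le 2s/N$, which you prove by the same all-zero coupling but applied stage by stage. Your version is slightly longer but more modular: the coverage bound is a clean standalone statement, and you only need Bernoulli's inequality $(1-1/N)^s\ge 1-s/N$ rather than tracking the exponential $(1-1/N)^t$. The paper's version is shorter because it avoids the intermediate reduction to identification and simply exploits that on $\mathcal{E}_t$ the cumulative reward is zero.
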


We can also show that (as in Appendix \ref{sec:bounds_of_DOI}), for any fixed $\D\in(0,1)$, there exists $\g>1$, such that for any $d\geq 2$ we can find a pair of action and parameter sets $(\A_d,\Th_d)$ with $\A_d,\Th_d\in\BB{R}^d$, $|\A_d|=|\Th_d|\geq\g^d$ that satisfies \eqref{eq:worst1}, \eqref{eq:worst2} and Assumption \ref{assp:main} with constant $\D$. For any real function $f(\cdot)$, polynomial $p(\cdot)$ and constant $\e\in(0,1)$, choose $d$ large enough such that $\g^{\e d}>16f(\D)p(d)$ and $\b_d$ large enough such that 
\[
    \phi_{\b_d}(\D) \geq 1-\frac{1}{|\A_d|}
\]
and
\[
    \phi_{\b_d}\left(\max_{a\in\A_d,\th\in\Th_d, a^\T\th<0}a^\T\th \right) \leq \frac{1}{|\A_d|}.
\]
Consider the problem $\C{L}=(\A_d,\Th_d,R_\B,\phi_{\b_d},\mathrm{Unif}(\A))$ at stage $T_0=\g^d/4$, from Proposition \ref{prop:NoSublinear} we have
\begin{equation}
    \RM{BayesRegret}(T_0;\C{L}, \pi)
    \geq \frac{T_0}{4}
    = \frac{\g^d}{16}
    = \frac{1}{4}\cdot\frac{\g^{\e d}}{4^\e}\cdot\left(\frac{\g^d}{4}\right)^{1-\e}
    > f(\D) p(d) T_0^{1-\e},
\end{equation}
for any policy $\pi$.

\acks{Toyota Research Institute (TRI) provided funds to assist the authors (Tengyu Ma) with their research but this article solely reflects the opinions and conclusions of its authors and not TRI or any other Toyota entity. Shi Dong is supported by the Herb and Jane Dwight Stanford Graduate Fellowship.}

\bibliography{bibliography_colt}

\appendix
\section{Proof of Proposition \ref{prop:strongerTuran}}
\label{sec:proof_of_prop_strongerTuran}
We present a graph-theoretical proof of Proposition \ref{prop:strongerTuran}. For simplicity, let $\eta=\eta(\C{U}, \C{V})$.  Let $\C{U}$ and $\C{V}$ be enumerated as $\C{U}=\{u_1, \dots, u_n\}$ and $\C{V}=\{v_1, \dots, v_n\}$. Without loss of generality, we assume that $f^*(v_i) = u_i$ for $i\in[n]$. We construct an undirected graph $\C{G} = (\C{K}, \C{E})$, where $\C{K}=\{1,\dots, n\}$, and for any pair $1\leq i<j\leq n$, $i$ and $j$ are connected by an edge $(i,j)\in\C{E}$ if and only if 
\[
    f^*(v_i)^\T v_j < 0 \text{ and } f^*(v_j)^\T v_i < 0.
\]
From Definition \ref{def:DOI}, there exists no $(\eta+1)$-clique in $\C{G}$.\par
Let $p$ be any probability measure on $\C{V}$. We use $p_i$ to denote the probability mass associated with $v_i$. Thus $p_i\geq 0$ and $\sum_{i=1}^n p_i = 1$. For fixed $\C{V}$, let $J(p) = \P_p\big( \hU^\T V < 0 \big)$, where the subscript $p$ indicates that the distribution of $V$ is $p$. We have that
\begin{eqnarray}
    J(p)
    &=& \P_p\big( \hU^\T V < 0 \big)\nn\\
    &=& \sum_{i=1}^n \sum_{j=1}^n \P_p(\hU=u_i) \P_p(V=v_j) \ind(u_i^\T v_j < 0)\nn\\
    &\overset{\add\thecount\label{1}}{=}& \sum_{i,j=1}^n p_i p_j \ind\big(f(v_i)^\T v_j < 0\big)\nn\\
    &\overset{\add\thecount\label{2}}{\leq}& \sum_{(i,j)\in\C{E}}p_ip_j + \frac{1}{2}\sum_{(i,j)\notin\C{E}}p_ip_j\nn\\
    &=& \frac{1}{2} + \frac{1}{2}\sum_{(i,j)\in\C{E}}p_ip_j,
\end{eqnarray}
where $\ref{1}$ comes from that
\[
    \P_p(\hU=u_i) = \P_p(U=u_i) = \P_p(V=v_i),
\]
and $\ref{2}$ is because for each $(i,j)\notin\C{E}$, at most one of $f(v_i)^\T v_j$ and $f(v_j)^\T v_i$ can be negative. Note that here $(i,i)\notin\C{E}$ for all $i\in[n]$. \par
Let $M(p):=\sum_{(i,j)\in\C{E}} p_ip_j$. 
We first argue that there exists probability measure $p^*$, such that 
\[
M(p^*) = \max_p M(p),
\]
and for any $(i,j)\notin\C{E}$, $i\neq j$, either $p^*_i=0$ or $p^*_j=0$. In fact, let $p$ and $(i,j)\notin\C{E}$ be arbitrary. Without loss of generality, assume that
\[
\sum_{k:(i,k)\in\C{E}} p_k \geq \sum_{k:(j,k)\in\C{E}}p_k.
\]
We define a new measure $p'$ as follows: $p'_i = p_i + p_j$, $p'_j = 0$ and $p'_\ell = p_\ell$ for $\ell\neq i,j$. Then
\begin{eqnarray}
    M(p')
    &=& \sum_{(\ell,k)\in\C{E}} p'_\ell p'_k\nn\\
    &=& \sum_{k:(i,k)\in\C{E}} p'_ip'_k + \sum_{k:(i,k)\in\C{E}} p'_jp'_k + \sum_{h, \ell\neq i,j: (h,\ell)\in\C{E}} p'_hp'_\ell\nn\\
    &=& \sum_{k:(i,k)\in\C{E}} (p_i+p_j)p_k  + \sum_{h, \ell\neq i,j: (h,\ell)\in\C{E}} p_hp_\ell\nn\\
    &\geq& \sum_{k:(i,k)\in\C{E}} p_ip_k + \sum_{k:(j,k)\in\C{E}} p_jp_k  + \sum_{h, \ell\neq i,j: (h,\ell)\in\C{E}} p_hp_\ell\nn\\
    &=& \sum_{(\ell,k)\in\C{E}} p_\ell p_k\nn\\
    &=& M(p).
\end{eqnarray}
Therefore, by moving all the probability mass from $j$ to $i$, the value $M$ does not decrease. Thus we can always find a probability measure $p^*$ which attains the maximum of $M$, and at the same time satisfies $p^*_ip^*_j=0$ whenever $(i,j)\notin\C{E}$ and $i\neq j$.\par
Next we show that there can be at most $\eta$ non-zero elements among $\{p^*_1,\dots,p^*_n\}$. In fact, since there exists no $(\eta+1)$-clique in $\C{G}$, for any subset $\{i_1, \dots, i_{\eta+1}\}$ of $\C{V}$ there must exist $(i_s, i_t)\notin\C{E}$ and $i_s\neq i_t$. This leads to $p^*_{i_s}p^*_{i_t}=0$. Hence $p^*$ must be supported on at most $\eta$ elements of $\C{X}$.\par 
Without loss of generality, let $p^*_1,\dots,p^*_{\eta}\geq 0$ and $p^*_{\eta+1},\dots, p^*_n=0$. Then
\begin{eqnarray}
    \max_p J(p)
    &\leq& \max_p \left(\frac{1}{2} + \frac{1}{2}M(p)\right)\nn\\
    &=&  \frac{1}{2} + \frac{1}{2}M(p^*)\nn\\
    &=& \frac{1}{2} + \frac{1}{2}\left(1 - \sum_{(i,j)\notin\C{E}} p^*_ip^*_j \right) \nn\\
    &\leq& 1 - \frac{1}{2}\sum_{k=1}^{\eta} (p^*_k)^2\nn\\
    &=& 1 - \frac{1}{2\eta},
\end{eqnarray}
where the last inequality comes from $\sum_{k=1}^{\eta} (p^*_k)^2\geq \frac{1}{\eta}\left(\sum_{k=1}^{\eta}p^*_k\right)^2= \frac{1}{\eta}$. Hence
\[
    \P_p\big( \hU^\T V \geq 0 \big) = 1 - J(p) \geq \frac{1}{2\eta},\quad \forall p,
\]
which is the result we desire.

\begin{remark}
If $\C{U}=\C{V}$ and $f^*$ is the identity function, we can get rid of the additional $1/2$ factor and show that 
\[
    \P(\hU^\T V\geq 0) \geq \frac{1}{\eta}.
\]
In fact, if $V$ is uniformly distributed on $\C{V}$, we can recover the prestigious Tur\'an's theorem in graph theory:
\begin{theorem}
\textbf{(\cite{turan1941})} If a graph with $n$ vertices does not contain any $(k+1)$-clique, then its number of edges cannot exceed $\left(1-\frac{1}{k}\right)\cdot \frac{n^2}{2}$.
\end{theorem}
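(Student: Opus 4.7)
The plan is to adapt the quadratic-optimization argument used in the proof of Proposition \ref{prop:strongerTuran}, specialized to the inherently symmetric setting of a plain undirected graph. Given $\C{G}=(\C{K},\C{E})$ on $n$ vertices with no $(k+1)$-clique, associate to each probability distribution $p$ on $\C{K}$ the quadratic form
\[
    M(p) \;=\; 2\sum_{\{i,j\}\in\C{E}} p_ip_j.
\]
Evaluating at the uniform distribution $p_i = 1/n$ gives $M = 2|\C{E}|/n^2$, so the theorem reduces to establishing $\max_p M(p)\leq 1-1/k$.

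First I would replay the weight-shifting lemma from the proof of Proposition \ref{prop:strongerTuran}. If $i\neq j$ are non-adjacent, then $M$ is linear in $p_i$ when the sum $p_i+p_j$ is held fixed: the contribution of edges incident to $i$ or to $j$ is linear in $p_i$, and there is no bilinear $p_ip_j$ term because $\{i,j\}\notin\C{E}$. Thus moving all mass from $j$ onto $i$ (or vice versa, choosing whichever side has the heavier weighted neighborhood) cannot decrease $M$. Iterating this operation across all non-adjacent pairs in the support, we may assume the optimizer $p^*$ is supported on a clique $C$.

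Second, since $\C{G}$ has no $(k+1)$-clique, $|C|\leq k$. Restricted to $C$, every pair $\{i,j\}\subseteq C$ is an edge, so
\[
    M(p^*) \;=\; 2\sum_{\{i,j\}\subseteq C} p^*_ip^*_j \;=\; \Bigl(\sum_{i\in C} p^*_i\Bigr)^2 - \sum_{i\in C}(p^*_i)^2 \;=\; 1 - \sum_{i\in C}(p^*_i)^2 \;\leq\; 1 - \frac{1}{|C|} \;\leq\; 1 - \frac{1}{k},
\]
where the penultimate inequality is Cauchy--Schwarz (the sum of squares of a probability vector on $m$ points is at least $1/m$). Combining with the uniform evaluation yields $2|\C{E}|/n^2 \leq 1 - 1/k$, i.e., $|\C{E}|\leq (1-1/k)\cdot n^2/2$.

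I do not foresee a serious obstacle. The argument is a distilled version of the one already carried out in detail in Appendix \ref{sec:proof_of_prop_strongerTuran}, with the simplification that the edge condition in a graph is automatically symmetric, so the factor of $1/2$ that appeared in the decomposition of $J(p)$ there does not arise here. The only fiddly point is bookkeeping with unordered versus ordered pairs in the definition of $M(p)$; I would fix unordered pairs throughout and use the identity $(\sum p_i)^2 = \sum p_i^2 + 2\sum_{i<j} p_ip_j$ to convert between the two conventions when needed.
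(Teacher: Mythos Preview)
Your proposal is correct and matches the paper's approach: the paper does not give a standalone proof of Tur\'an's theorem but indicates in the remark following Proposition~\ref{prop:strongerTuran} that it is recovered by specializing that proof to the symmetric case $\C{U}=\C{V}$, $f^*=\mathrm{id}$, with $V$ uniform, which is precisely the Motzkin--Straus weight-shifting argument you spell out. Your write-up is just the explicit version of what the paper sketches, including the same clique-support reduction and the same Cauchy--Schwarz lower bound $\sum_i (p^*_i)^2\geq 1/|C|$.
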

\end{remark}
By restricting the random vector $V$ to a subset of $\BB{R}^d$, we have the following corollary.
\begin{corollary}
    \label{cor:strongerTuran}
    Let $\C{U},\C{V}$ be finite subsets of $\BF{B}_d$. Suppose that there exists bijection $f^*:\C{V}\mapsto\C{U}$ such that
    \[
        f^*(v)^\T v = \max_{u\in\C{U}} u^\T v,\quad\forall v\in\C{V},
    \]
    and $f^*(v)^\T v>0$ for all $v\in \C{V}$. Let $V$ be any random variable supported on $\C{V}$, $U=f^*(V)$ and $\hU$ be an iid copy of $U$. Then for any $\C{S}\subseteq\C{V}$,
    \begin{equation}
        \label{eq:strongerTuran}
        \E\bigg[\ind\big(\hU^\T V\geq 0\big) \ind\big(\hU\in f^*(\C{S})\big)\ind\big(V\in \C{S}\big)\bigg] \geq \frac{1}{2\eta(\C{U},\C{V})} \E\bigg[\ind\big(\hU\in f^*(\C{S})\big)\ind\big(V\in \C{S}\big)\bigg],
    \end{equation}
    where $f^*(\C{S}):= \{ f^*(v): v\in\C{S}\}$.
\end{corollary}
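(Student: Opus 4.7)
The plan is to reduce Corollary \ref{cor:strongerTuran} to Proposition \ref{prop:strongerTuran} by conditioning on the two indicator events. Let $q := \E[\ind(\hat{U}\in f^*(\C{S}))\ind(V\in \C{S})]$. Since $\hat U$ is an iid copy of $U=f^*(V)$ and $f^*$ is a bijection, $\{\hat U\in f^*(\C{S})\}$ is distributed like $\{V\in \C{S}\}$, and by independence $q = \P(V\in \C{S})^2$. If $q=0$, both sides of \eqref{eq:strongerTuran} vanish and there is nothing to prove, so assume $q>0$.

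Next, I would introduce the restricted instance $\C{U}':= f^*(\C{S})$, $\C{V}':= \C{S}$, with $f^*$ restricted to $\C{S}$. I would then verify that this restricted instance satisfies the hypotheses of Proposition \ref{prop:strongerTuran}: the restriction of $f^*$ is a bijection $\C{V}'\to \C{U}'$; for any $v\in \C{S}$, $f^*(v)\in \C{U}'$, so
\[
  \max_{u\in \C{U}'} u^\T v \;\le\; \max_{u\in \C{U}} u^\T v \;=\; f^*(v)^\T v,
\]
with equality attained at $u=f^*(v)$; and the positivity $f^*(v)^\T v>0$ is inherited. I would also observe the monotonicity $\eta(\C{U}',\C{V}')\le \eta(\C{U},\C{V})$, since any $\{v_1,\dots,v_M\}\subseteq \C{V}'\subseteq \C{V}$ witnessing the fragility dimension of the restricted instance is a valid witness for the original instance (the restricted $f^*$ agrees with the original on $\C{S}$).

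Now let $V'$ be a random vector whose distribution is the conditional law of $V$ given $V\in \C{S}$, set $U' = f^*(V')$, and let $\hat U'$ be an iid copy of $U'$. Applying Proposition \ref{prop:strongerTuran} to $(\C{U}',\C{V}',V')$ yields
\[
  \P\bigl(\hat U'^\T V' \ge 0\bigr) \;\ge\; \frac{1}{2\eta(\C{U}',\C{V}')} \;\ge\; \frac{1}{2\eta(\C{U},\C{V})}.
\]
It remains to translate this back to the original variables. Using independence of $V$ and $\hat U$ together with the bijectivity of $f^*$, the conditional joint law of $(V,\hat U)$ given $\{V\in \C{S}\}\cap\{\hat U\in f^*(\C{S})\}$ coincides with the joint law of $(V',\hat U')$. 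Hence
\[
  \frac{\E[\ind(\hat U^\T V\ge 0)\ind(\hat U\in f^*(\C{S}))\ind(V\in \C{S})]}{\E[\ind(\hat U\in f^*(\C{S}))\ind(V\in \C{S})]} \;=\; \P(\hat U'^\T V'\ge 0) \;\ge\; \frac{1}{2\eta(\C{U},\C{V})},
\]
which is exactly \eqref{eq:strongerTuran}.

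The routine calculations (verifying the hypotheses for the restricted instance, the monotonicity $\eta(\C{U}',\C{V}')\le \eta(\C{U},\C{V})$, and the trivial case $q=0$) are all straightforward. The only point that requires care is checking that the conditional joint distribution of $(V,\hat U)$ given the two independent events genuinely matches $(V',\hat U')$: this is what lets a single application of Proposition \ref{prop:strongerTuran} to the conditional measure collapse into the unconditional inequality \eqref{eq:strongerTuran}, and it is the one place where the independence of $V$ and $\hat U$ and the bijectivity of $f^*$ are both essential.
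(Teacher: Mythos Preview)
Your proposal is correct and matches the paper's approach: the paper supplies only the one-line hint ``by restricting the random vector $V$ to a subset of $\BB{R}^d$, we have the following corollary,'' and your argument is a careful fleshing out of exactly that restriction-and-conditioning strategy. The one point worth flagging is the monotonicity step $\eta(\C{U}',\C{V}')\le\eta(\C{U},\C{V})$, which you handle correctly by observing that the restricted $f^*$ agrees with the original on $\C{S}$; this is worth making explicit precisely because the paper elsewhere stresses that fragility dimension is \emph{not} monotone under arbitrary set inclusion.
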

\newpage

\section{Proof of Theorem \ref{thm:main}}
\label{sec:proof_of_thm_main}
Considering Proposition \ref{prop:previousLogistic}, and the fact that
\[
    \frac{\b e^{\b\D}}{(1+e^{\b\D})^2} \leq \frac{1}{4\D},
\]
we only have to show
\begin{equation}
    \label{eq:infoRatioBound}
    \G_t(\C{L}_\b, \pi^\TS)\leq 100\D^{-2}(\eta(\C{L}_\b)\vee d),\quad \text{a.s.}, \forall t.
\end{equation}
We will present two separate proofs of \eqref{eq:infoRatioBound} for $\b\leq 2$ and $\b>2$, respectively. For $\b\leq 2$, we resort to the previous Lipschizity analysis; for $\b>2$, we adopt a new line of analysis that is connected to our definition of fragility dimension.\par
We fix the stage index $t$ in this section. To simplify notations, we let $Y$ be a random variable with the same distribution as $\th^*$ conditioned on $\hist_{t-1}$. We also define $X=\a(Y)$ and let $\hX$ be an iid copy of $X$, $\hY$ an iid copy of $Y$. Thus $X$, $Y$, $\hX$ and $\hY$ can be interpreted as aliases for $A^*$, $\th^*$, $A_t$ and $\th_t$, respectively. As a shorthand we use $\eta$ in place of $\eta(\C{L}_\b)$. We will omit the ``almost surely'' qualifications whenever ambiguities do not arise.\par

Before moving on, we introduce a result adapted from \cite{russo2016information}, which gives a primitive bound of information ratio.
\begin{proposition}
    \label{prop:primitiveBound}
    For any generalized linear bandit problem $\C{L} = (\A, \Th, R, \phi, \rho)$,
    \begin{equation}
    \label{eq:primitiveBound}
        \Gamma_t(\C{L}, \pi^\TS) \leq \frac{\E\big[\phi(X^\T Y) - \phi(X^\T \hY)\big]^2}{2\cdot\E\big[\Var[\phi(X^\T \hY)|X]\big]}.
    \end{equation}
\end{proposition}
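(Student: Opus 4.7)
The plan is to bound the numerator and denominator of $\Gamma_t(\C{L}, \pi^{\TS})$ separately, following the information-theoretic template of \cite{russo2016information} and \cite{dong2018information}.

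For the numerator, the key is the Thompson-sampling exchangeability: given $\hist_{t-1}$, the sampled parameter $\hY$ is drawn from the posterior of $\th^*$ using fresh randomness, so $\hY$ is conditionally independent of $Y$ and shares its marginal distribution. Because $\hX=\a(\hY)$ and $X=\a(Y)$, the pairs $(\hX,Y)$ and $(X,\hY)$ therefore have the same joint distribution (each is a product of the $X$-marginal with an independent $Y$-marginal). Combining this with the tower property and $\E[R(a)\mid\th^*=\th]=\phi(a^\T\th)$ yields
\[
\E_{t-1}[R^*] = \E[\phi(X^\T Y)], \qquad \E_{t-1}[R_t] = \E[\phi(\hX^\T Y)] = \E[\phi(X^\T \hY)],
\]
so that squaring $\E_{t-1}[R^*-R_t] = \E[\phi(X^\T Y) - \phi(X^\T \hY)]$ recovers the numerator of the claimed bound.

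For the denominator, the mutual-information chain rule together with the conditional independence of $A^*$ and $A_t$ under $\pi^{\TS}$ (they are iid draws from the posterior of the optimal action) gives
\[
I_{t-1}(A^*; A_t, R_t) = I_{t-1}(A^*; R_t \mid A_t) = \E\Big[D_{\mathrm{KL}}\big(\P_{t-1}(R_t\in\cdot \mid A^*, A_t)\,\big\|\,\P_{t-1}(R_t\in\cdot \mid A_t)\big)\Big].
\]
Because rewards are Bernoulli with success probability $\phi(a^\T\th)$ given $A_t=a$ and $\th^*=\th$, Pinsker's inequality specializes to $D_{\mathrm{KL}}(\mathrm{Bern}(p)\|\mathrm{Bern}(q))\geq 2(p-q)^2$. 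Substituting, evaluating the conditional means in the alias notation, and then invoking the same exchangeability as above produces
\[
I_{t-1}(A^*; A_t, R_t) \;\geq\; 2\,\E\big[\Var[\phi(\hX^\T Y)\mid \hX]\big] \;=\; 2\,\E\big[\Var[\phi(X^\T \hY)\mid X]\big].
\]
Dividing the squared numerator by this lower bound gives the stated inequality.

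The main obstacle, while bookkeeping rather than conceptual, is careful handling of the Thompson-sampling exchangeability and its induced conditional independence: one has to verify both $(\hX,Y)\stackrel{d}{=}(X,\hY)$ conditionally on $\hist_{t-1}$ and the conditional independence of $\th^*$ and $A_t$ given $\hist_{t-1}$, so that when the inner KL divergence is expanded the conditional law of $\th^*$ given $A_t$ collapses to the posterior marginal and the Bernoulli Pinsker bound can be applied termwise.
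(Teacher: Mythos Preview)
Your proposal is correct and follows essentially the same route as the paper's proof: both arguments use the chain rule together with the conditional independence of $A^*$ and $A_t$ under Thompson sampling to reduce $I_{t-1}(A^*;A_t,R_t)$ to $I_{t-1}(A^*;R_t\mid A_t)$, then apply Pinsker's inequality for Bernoulli rewards to obtain the conditional-variance lower bound, and finally invoke the exchangeability $(\hX,Y)\stackrel{d}{=}(X,\hY)$ to rewrite everything in terms of $(X,\hY)$. The only cosmetic difference is that the paper first passes from $A^*$ to $\th^*$ via the bijection $\alpha$ before unrolling the mutual information, whereas you work directly with $A^*$; the content is the same.
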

\begin{proof}
First notice that, since $\hX$ is independent of $Y$ and $\hY$ is independent of $X$, we have $\E[\phi(X^\T \hY)] = \E[\phi(\hX^\T Y)]$. Therefore \eqref{eq:primitiveBound} is equivalent to
\begin{equation}
    \label{eq:primitiveBound_fixed}
    \Gamma_t(\C{L}, \pi^\TS) \leq \frac{\E\big[\phi(X^\T Y) - \phi(\hX^\T Y)\big]^2}{2\cdot\E\big[\Var[\phi(X^\T \hY)|X]\big]}.
\end{equation}

Comparing \eqref{eq:defInfoRatio} and \eqref{eq:primitiveBound_fixed} and , we only have to show
\begin{equation}{}
    \label{eq:proof-primitiveBound}
    I(X;\hX, R(\hX)) \geq 2\cdot\E\big[\Var[\phi(X^\T \hY)|X]\big].
\end{equation}
In fact, we have that
\begin{eqnarray}
I(X;\hX, R(\hX))
&\overset{\add\thecount\label{15}}{=}& I(Y;\hX, R(\hX))\nn\\
&=& I(Y;\hX) + I(Y;R(\hX) | \hX)\nn\\
&\overset{\add\thecount\label{16}}{=}& I(Y; R(\hX)|\hX)\nn\\
&=& \sum_{x\in\A} I(Y; R(x))\P(\hX=x)\nn\\
&\overset{\add\thecount\label{17}}{=}& \sum_{y\in\Th} I(Y; R(y))\P(Y=y)\nn\\
&=&\sum_{y, y'\in\Th} D_\mathrm{KL}\Big( P\big(R(y')\big) \big\| P\big(R(y')|y=y\big)\Big)\cdot\P(Y=y)\P(Y=y')\nn\\
&\overset{\add\thecount\label{18}}{\geq}& 2 \sum_{y, y'\in\Th}\Big( \E\big[R(y')\big] - \E\big[R(y')|y=y\big]\Big)^2\cdot\P(Y=y)\P(Y=y')\nn\\
&=& 2\sum_{y\in\Th}\P(Y=y)\left\{ \sum_{y'\in\Th}\P(Y=y')\Big( \E\big[R(y')\big] - \E\big[R(y')|Y=y\big]\Big)^2\right\}\nn\\
&\overset{\add\thecount\label{19}}{=}& 2\cdot\E\big[\Var[\phi(\hX^\T Y)|\hX]\big]\nn\\
&=& 2\cdot\E\big[\Var[\phi(X^\T \hY)|X]\big],
\end{eqnarray}
where we use $R(y)$ to denote $R(\a(y))$ for $y\in\Th$. In $\ref{15}$ and $\ref{17}$ we use the fact that $\a$ is a bijection. That $\ref{16}$ holds is because of the independence between $Y$ and $\hX$. In $\ref{18}$ we apply the Pinsker's inequality upon noticing that $R\in\{0,1\}$. The final step $\ref{19}$ follows from the fact that
\[
    \E\big[R(y')| Y=y\big] = \phi(\a(y')^\T y),
\]
and that
\[
    \E\big[R(y')\big] = \E\big[\phi(\a(y')^\T Y)].
\]
Thus we have \eqref{eq:proof-primitiveBound}.
\end{proof}

\subsection{Proof of \eqref{eq:infoRatioBound} for Small $\b$}
We first point out to a useful lemma.
\begin{lemma} 
    \label{lemma:marginals}
    Let $U,V$ be random vectors in $\BB{R}^d$, and let $R,S$ be independent random variables with distributions equal to the marginals of $U,V$, respectively. Then
    \[
        \E\big[ |U^\T V| \big] ^2 \leq d\cdot \E\big[ (R^\T S)^2 \big].
    \]
\end{lemma}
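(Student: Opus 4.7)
The plan is to reduce the inequality to a trace identity via the spectral decomposition of $M_V := \E[VV^\T]$ together with two applications of Cauchy--Schwarz. First I would observe that the independence of $R,S$ together with $R \stackrel{d}{=} U$ and $S \stackrel{d}{=} V$ gives
\[
    \E\big[(R^\T S)^2\big] \;=\; \mathrm{tr}(M_U M_V), \qquad M_U := \E[UU^\T],
\]
so that the desired right-hand side is simply $d\cdot \mathrm{tr}(M_U M_V)$, a quantity that is much more tractable than the original.

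Next I would diagonalize $M_V = \sum_{k=1}^d \lambda_k e_k e_k^\T$ in an orthonormal eigenbasis $\{e_k\}$ with eigenvalues $\lambda_k\ge 0$, and use the resolution of the identity to expand $U^\T V = \sum_k (U^\T e_k)(e_k^\T V)$. Applying the triangle inequality, taking expectations, and then invoking Cauchy--Schwarz coordinatewise yields
\[
    \E\big[|U^\T V|\big] \;\le\; \sum_{k=1}^d \sqrt{\E[(U^\T e_k)^2]\,\E[(e_k^\T V)^2]} \;=\; \sum_{k=1}^d \sqrt{(e_k^\T M_U e_k)\,\lambda_k}.
\]
A second application of Cauchy--Schwarz in the form $\big(\sum_{k=1}^d x_k\big)^2 \le d \sum_{k=1}^d x_k^2$ then introduces the factor of $d$:
\[
    \E\big[|U^\T V|\big]^2 \;\le\; d\sum_{k=1}^d \lambda_k\,(e_k^\T M_U e_k) \;=\; d\,\mathrm{tr}(M_U M_V),
\]
where the last equality uses that $\{e_k\}$ diagonalizes $M_V$. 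Combining with the first step closes the lemma.

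The main obstacle --- and the reason one must be careful --- is picking the correct orthonormal basis. The naive bound $|U^\T V|\le\|U\|\|V\|$ followed by Cauchy--Schwarz yields only $\E[|U^\T V|]^2 \le \mathrm{tr}(M_U)\,\mathrm{tr}(M_V)$, which can be arbitrarily larger than $d\,\mathrm{tr}(M_U M_V)$ when $U$ and $V$ concentrate on nearly disjoint principal directions. It is precisely the choice of the eigenbasis of $M_V$ (or, symmetrically, of $M_U$) that makes the weighted sum $\sum_k \lambda_k (e_k^\T M_U e_k)$ collapse to a clean trace rather than to a looser upper bound; in an arbitrary basis the corresponding sum need not equal $\mathrm{tr}(M_U M_V)$, and the chain of inequalities breaks.
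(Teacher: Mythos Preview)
Your argument is correct and lands on the same intermediate inequality $\E[|U^\T V|]^2 \le d\,\mathrm{tr}(M_U M_V)$ as the paper, but the route is genuinely different. The paper's proof is a two-step preconditioning trick: it rewrites $U^\T V = (\Sigma^{1/2}U)^\T(\Sigma^{-1/2}V)$ with $\Sigma=\E[VV^\T]$, applies Cauchy--Schwarz in $\BB{R}^d$ to obtain $|U^\T V|\le\|\Sigma^{1/2}U\|_2\,\|\Sigma^{-1/2}V\|_2$, and then applies Cauchy--Schwarz in $L^2$ to split the expectation. The factor $d$ drops out of the identity $\E[\|\Sigma^{-1/2}V\|_2^2]=\mathrm{tr}(\Sigma^{-1}\Sigma)=d$, rather than from a discrete $(\sum_k x_k)^2\le d\sum_k x_k^2$ as in your argument. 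Your eigenbasis expansion plus coordinatewise Cauchy--Schwarz is slightly more elementary (no matrix square root or inverse) and handles rank-deficient $M_V$ without any special pleading, whereas the paper's version tacitly assumes $\Sigma$ is invertible. Conversely, the paper's proof is shorter and makes the source of the factor $d$ more structural; it also avoids the triangle-inequality step, which in principle is one more place where slack could enter.
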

\begin{proof}
Let $\S =\E[VV^\T]$, then
\begin{eqnarray}
    \label{eq:lemma:marginals-1}
    \E\big[ |U^\T V| \big]
    &\overset{\add\thecount\label{3}}{\leq}& \E \big[ \|\S^{1/2} U\|_2\cdot \|\S^{-1/2}V\|_2 \big]\nn\\
     &\overset{\add\thecount\label{4}}{\leq}& \E \big[ \|\S^{1/2} U\|_2^2\big]^{1/2}\cdot \E\big[ \|\S^{-1/2}V\|_2^2 \big]^{1/2}\nn\\
     &=& \E\big[ R^\T \E[SS^\T] R \big]^{1/2}\cdot \E\big[ V^\T \E[VV^\T]^{-1} V \big]^{1/2}\nn\\
     &\overset{\add\thecount\label{5}}{=}& \left(\E\big[ (R^\T S)^2 \big] \cdot d\right)^{1/2},
\end{eqnarray}
where $\ref{3}$ and $\ref{4}$ result from Cauchy-Schwarz inequality and $\ref{5}$ comes from the fact that for any random vector $W$ and non-random matrix $A$, there is $\E[W^\T AW] = \mathrm{Tr}(A\mathrm{Cov}(W)) + \E[W]^\T A\E[W]$. Thus we arrive at our desired result.
\end{proof}

\begin{proposition}
    \label{prop:lipschitz}
    Let $\C{L}=(\A, \Th, R, \phi, \rho)$ be any generalized linear bandit problem instance where $\phi$ is such that there exist constants $0<L_1\leq L_2$ with
    \[
        L_1 \leq |\phi'(x)| \leq L_2,\quad \forall x\in[-1,1].
    \]
    Then we have
    \[
    \G_t(\C{L}, \pi^\TS) \leq d \cdot \frac{L_2^2}{L_1^2}.
    \]
    Specifically, for the logistic bandit problem $\C{L}_\b$, there is
    \[
    \G_t(\C{L}_\b, \pi^\TS) \leq d \cdot \left(\frac{(1+e^\b)^2}{e^\b}\right)^2.
    \]
\end{proposition}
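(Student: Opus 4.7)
My plan is to apply Proposition \ref{prop:primitiveBound} to reduce the information ratio to a ratio of two quantities involving $\phi(X^\T Y)$ and $\phi(X^\T \hY)$, and then separately bound the numerator above and the denominator below using, respectively, the Lipschitz and ``reverse Lipschitz'' consequences of the two-sided bound on $|\phi'|$.

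For the numerator, I would first use Jensen's inequality and the $L_2$-Lipschitz property of $\phi$ on $[-1,1]$ to write
\[
\Big(\E\big[\phi(X^\T Y) - \phi(X^\T \hY)\big]\Big)^2 \leq \Big(\E\big[\,|\phi(X^\T Y) - \phi(X^\T \hY)|\,\big]\Big)^2 \leq L_2^2 \Big(\E\big[\,|X^\T (Y - \hY)|\,\big]\Big)^2.
\]
Next I would invoke Lemma \ref{lemma:marginals} with $U = X$ and $V = Y - \hY$, yielding $\big(\E[\,|X^\T (Y - \hY)|\,]\big)^2 \leq d \cdot \E[(R^\T S)^2]$ where $R$ has the marginal of $X$ and $S$ has the marginal of $Y - \hY$ and is independent of $R$. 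A direct computation using that $Y$ and $\hY$ are i.i.d.\ gives $\E[SS^\T] = 2\,\mathrm{Cov}(Y)$, so $\E[(R^\T S)^2] = 2\,\E[X^\T \mathrm{Cov}(Y) X]$, and the numerator is at most $2 L_2^2 d \cdot \E[X^\T \mathrm{Cov}(Y) X]$.

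For the denominator, I would use that $|\phi'| \geq L_1 > 0$ on $[-1,1]$ forces $\phi$ to be strictly monotone there, hence $|\phi(a) - \phi(b)| \geq L_1 |a-b|$ for $a,b \in [-1,1]$. Applying this to $Z = X^\T \hY$ and an independent copy $Z'$ of $Z$ conditional on $X$, the standard identity $\Var[g(Z)|X] = \tfrac{1}{2}\E[(g(Z)-g(Z'))^2|X]$ gives $\Var[\phi(X^\T \hY)|X] \geq L_1^2 \Var[X^\T \hY | X] = L_1^2\, X^\T \mathrm{Cov}(Y) X$ (using that $\hY$ is independent of $X$ and distributed as $Y$). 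Taking outer expectation and plugging into Proposition \ref{prop:primitiveBound}, the common factor $\E[X^\T \mathrm{Cov}(Y) X]$ cancels and we obtain $\G_t(\C{L},\pi^{\TS}) \leq d \cdot L_2^2 / L_1^2$.

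Finally, for the logistic instance I would specialize to $\phi = \phib$, compute $\phi_\beta'(x) = \beta e^{\beta x}/(1+e^{\beta x})^2$, and observe that on $[-1,1]$ this derivative is maximized at $x=0$ (giving $L_2 = \beta/4$) and minimized at $x = \pm 1$ (giving $L_1 = \beta e^\beta/(1+e^\beta)^2$ by symmetry). Then $L_2^2 / L_1^2 = (1+e^\beta)^4 / (16 e^{2\beta})$, which is bounded by $\big((1+e^\beta)^2/e^\beta\big)^2$, yielding the stated corollary. I do not expect any genuinely hard step here; the main thing to be careful about is the monotonicity argument that upgrades $|\phi'| \geq L_1$ to the reverse Lipschitz bound, and the bookkeeping in computing $\E[SS^\T]$ so that the same covariance factor $\E[X^\T \mathrm{Cov}(Y) X]$ appears in both numerator and denominator and cleanly cancels.
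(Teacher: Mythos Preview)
Your proposal is correct and follows essentially the same route as the paper: start from Proposition~\ref{prop:primitiveBound}, upper bound the numerator via the $L_2$-Lipschitz property and Lemma~\ref{lemma:marginals}, lower bound the denominator via the reverse Lipschitz property using an independent copy of $\hY$, and cancel the common factor. The only cosmetic differences are that the paper leaves the common factor in the form $\E\big[(X^\T(\hY-\tY))^2\big]$ rather than rewriting it as $2\,\E[X^\T\mathrm{Cov}(Y)X]$, and that the paper uses the cruder bound $\phi_\beta'(x)\le\beta$ instead of your sharper $L_2=\beta/4$; both choices lead to the stated corollary.
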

\begin{proof}
From Proposition \ref{prop:primitiveBound}, we have 
\[
    \G_t(\C{L}, \pi^\TS) \leq \frac{\E\big[\phi(X^\T Y) - \phi(X^\T \hY)\big]^2}{2\E\big[\Var[\phi(X^\T \hY)|X]\big]}.
\]
Let $\tY$ be another iid copy of $Y$, there is
\begin{eqnarray}
    \label{eq:infoRatioDown}
    \E\big[\Var[\phi(X^\T \hY)|X]\big]
    &=& \frac{1}{2}\cdot\E\Big[\E\big[\big(\phi(X^\T \hY) - \phi(X^\T \tY)\big)^2|X\big]\Big] \nn\\
    &\geq&  \frac{1}{2}\cdot\E\Big[\E\big[\big(X^\T \hY - X^\T \tY\big)^2\cdot L_1^2|X\big]\Big] \nn\\
    &=& \frac{L_1^2}{2}\cdot\E\Big[\big(X^\T \hY - X^\T \tY\big)^2\Big].
\end{eqnarray}
On the other hand, there is also
\begin{eqnarray}
    \label{eq:infoRatioUp}
    \E\big[ \phi(X^\T Y) - \phi(X^\T \hY) \big]
    &\leq& L_2\cdot \E\big[ |X^\T Y - X^\T \hY| \big] \nn\\
    &=& L_2\cdot \E\big[ |X^\T (Y-\hY)|\big] \nn\\
    &\overset{\add\thecount\label{6}}{\leq}& L_2\cdot \sqrt{d\cdot \E\big[ (X^\T(\hY - \tY))^2\big]},
\end{eqnarray}
where $\ref{6}$ follows from Lemma \ref{lemma:marginals}. Comparing \eqref{eq:infoRatioDown} and \eqref{eq:infoRatioUp}, we arrive at
\[
    \G_t(\C{L}, \pi^\TS) \leq d \cdot \frac{L_2^2}{L_1^2},
\]
which is the desired result.
Plugging in $\C{L}_\b$ into Proposition \ref{prop:lipschitz} and notice that 
\[
    \frac{\b e^\b}{(1+e^\b)^2}\leq \phi'_\b(x) \leq \b,\quad\forall x\in[-1,1],
\]
we shall arrive at 
\[
    \G_t(\C{L}_\b, \pi^\TS) \leq d \cdot \left(\frac{(1+e^\b)^2}{e^\b}\right)^2.
\]
\end{proof}
From Proposition \ref{prop:lipschitz}, for $\b\leq 2$, there is
\begin{equation}
    \label{eq:cor:smallBeta}
    \G_t(\C{L}_\b, \pi^\TS) \leq d\cdot\left(\frac{(1+e^2)^2}{e^2}\right)^2 < 100d.
\end{equation}

\subsection{Proof of \eqref{eq:infoRatioBound} for Large $\b$}
In this section we show \eqref{eq:infoRatioBound} for $\b>2$. Throughout we assume that Assumption \ref{assp:main} holds with constant $\D\in(0,1)$. For any $x\in\A$, let $\s(x)=x^\T \a^{-1}(x)$. For $\zeta\in\BB{R}$, We further define
\[
    \gbl(\zeta) := \phib(\l) - \phib(\l-\zeta).
\]
and let $\zbl=\argmax_{\zeta\in[0, 1+\l]}\gbl(\zeta)/\zeta$, $\wbl=(\l+\zbl)/2$ and $\nb(x) = \E[\phib(X^\T Y) -\phib(X^\T \hY)|X=x] $. Under the above notations, \eqref{eq:primitiveBound} can be written as
\begin{equation}
    \label{eq:primitiveBound2}
    \Gamma_t(\C{L}, \pi^\TS) \leq \frac{\E\big[\gbY(\s(X) - X^\T \hY)\big]^2}{2\cdot\E\big[ \big( \gbY(\s(X) - X^\T \hY) - \nb(X) \big)^2 \big]}.
\end{equation}
We also partition the action set $\A$ into two subsets:
\[
    \C{D} := \left\{ x\in\A: \nb(x) \leq \gbl(\wbl)\right\}
\]
and $\bar{\C{D}} = \Th\backslash\C{D}$. Suppose that we can find constants $C_1, C_2$, such that
\[
\E\big[\gbY(\s(X) - X^\T \hY)\ind(X\in\C{D})\big]^2 \leq C_1\cdot \E\Big[ \big( \gbY(\s(X) - X^\T \hY) - \nb(X) \big)^2 \ind(X\in\C{D})\Big]
\]
and
\[
\E\big[\gbY(\s(X) - X^\T \hY)\ind(X\in\bar{\C{D}})\big]^2 \leq C_2\cdot \E\Big[ \big( \gbY(\s(X) - X^\T \hY) - \nb(X) \big)^2 \ind(X\in\bar{\C{D}})\Big]
\]
Then, from Cauchy-Schwarz inequality we have
\begin{eqnarray}
    && \E\big[\gbY(\s(X) - X^\T \hY)\big]^2\nn\\
    &=& \left(\E\big[\gbY(\s(X) - X^\T \hY)\ind(X\in\C{D})\big] + \E\big[\gbY(\s(X) - X^\T \hY)\ind(X\in\bar{\C{D}})\big]\right)^2\nn\\
    &\leq& 2 \left\{ \left(\E\big[\gbY(\s(X) - X^\T \hY)\ind(X\in\C{D})\big]\right)^2 + \left(\E\big[\gbY(\s(X) - X^\T \hY)\ind(X\in\bar{\C{D}})\big]\right)^2\right\}\nn\\
    &\leq& 2\max\{C_1,C_2\}\E\Big[ \big( \gbY(\s(X) - X^\T \hY) - \nb(X) \big)^2 \Big],
\end{eqnarray}
Thus we can bound the right-hand side of \eqref{eq:primitiveBound2} by
\begin{equation}
    \frac{\E\big[\gbY(\s(X) - X^\T \hY)\big]^2}{2\cdot\E\big[ \big( \gbY(\s(X) - X^\T \hY) - \nb(X) \big)^2 \big]}
    \leq
    \max\{C_1,C_2\}.
\end{equation}
To determine $C_1$, we first introduce a lemma.
\begin{lemma}
    \label{lemma:gbb}
    Let $f:\BB{R}_+\mapsto\BB{R}_+$ be such that $f(0)=0$ and $f(\zeta)/\zeta$ is non-decreasing over $\zeta\geq0$ (f(0)/0 is interpreted as the limit of $\zeta\downarrow 0$). Then for any non-negative random variable $U$, there is
    \begin{equation}
        \label{eq:lemma:variance}
        \frac{\E[f(U)]^2}{\E[U]^2} \leq \frac{\Var[f(U)]}{\Var[U]}.
    \end{equation}
\end{lemma}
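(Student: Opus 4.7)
\medskip

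The plan is to expand around a ``constant'' baseline for $g(\zeta) := f(\zeta)/\zeta$ and reduce the claim to a single mean-zero inequality. Assume $\E[U] > 0$ (otherwise $U \equiv 0$ and both sides of \eqref{eq:lemma:variance} vanish), set $c := \E[f(U)]/\E[U] \geq 0$, and define $h(\zeta) := g(\zeta) - c$. Then $h$ is non-decreasing on $\mathbb{R}_+$ (since $g$ is), and by construction $\E[U h(U)] = \E[f(U)] - c\,\E[U] = 0$. Next, I rewrite the target inequality in the equivalent cross-multiplied form $\E[U]^2 \E[f(U)^2] \geq \E[f(U)]^2 \E[U^2]$, expand $f(U)^2 = U^2 (h(U) + c)^2$, and observe that the $c^2$ contributions on the two sides cancel, yielding
\[
\E[U]^2 \E[f(U)^2] - \E[f(U)]^2 \E[U^2] \;=\; \E[U]^2 \E[U^2 h(U)^2] + 2c\, \E[U]^2 \E[U^2 h(U)].
\]
The first summand is manifestly non-negative, and $c \geq 0$, so the lemma will follow once I show $\E[U^2 h(U)] \geq 0$.

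The crux of the argument is a sign-switching step. Since $h$ is non-decreasing, I can define the threshold $x_0 := \inf\{\zeta \geq 0 : h(\zeta) \geq 0\}$ (interpreted as $+\infty$ if the set is empty); then $h(\zeta) \leq 0$ for $\zeta < x_0$ and $h(\zeta) \geq 0$ for $\zeta > x_0$. In both regions the factors $U - x_0$ and $U h(U)$ carry the same sign (both non-positive when $U < x_0$ since $U, -h(U) \geq 0$; both non-negative when $U \geq x_0$), so
\[
U h(U)\,(U - x_0) \;\geq\; 0 \quad \text{almost surely,}
\]
which rearranges to $U^2 h(U) \geq x_0\, U h(U)$ pointwise. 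Taking expectations and invoking $\E[U h(U)] = 0$ gives $\E[U^2 h(U)] \geq x_0 \cdot 0 = 0$, as needed.

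The edge cases (where $h$ has constant sign, so that $x_0 \in \{0, +\infty\}$) are easily absorbed: if $h \geq 0$ or $h \leq 0$ everywhere, then $U h(U)$ has constant sign and $\E[U h(U)] = 0$ forces $U h(U) = 0$ a.s., hence $U^2 h(U) = 0$ a.s. The main potential obstacle is the algebraic reduction collapsing the $c^2$ terms correctly and tracking that $c \geq 0$ (which comes from $f, U \geq 0$); once that bookkeeping is done, the sign-switching coupling of $h$ against its root $x_0$ is short and essentially determines the proof.
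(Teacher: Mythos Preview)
Your proof is correct and takes a genuinely different route from the paper's. Both arguments first reduce the variance inequality to the second-moment form $\E[U]^2\,\E[f(U)^2] \geq \E[f(U)]^2\,\E[U^2]$, but they diverge from there. The paper introduces an iid copy $V$ of $U$ and uses the symmetrization identity
\[
\E[g(U)^2U]\,\E[U^2] - \E[g(U)^2U^2]\,\E[U] \;=\; \tfrac12\,\E\big[UV(V-U)\big(g(U)^2-g(V)^2\big)\big] \;\leq\; 0
\]
(the last inequality by monotonicity of $g$), and then chains this with Cauchy--Schwarz in the form $\E[g(U)U]^2 \leq \E[g(U)^2U]\,\E[U]$ to conclude. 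You instead center $g$ by its ``mean slope'' $c = \E[f(U)]/\E[U]$, which makes the $c^2$ contributions cancel exactly and collapses the problem to the single sign condition $\E[U^2 h(U)] \geq 0$; this you dispatch via the pointwise inequality $Uh(U)(U-x_0)\geq 0$ together with $\E[Uh(U)]=0$. Your route is a bit more elementary---no auxiliary randomness, no Cauchy--Schwarz---and isolates the monotonicity input in one clean threshold step; the paper's symmetrization is perhaps more recognizable as a standard rearrangement-type device and yields the slightly stronger intermediate bound $\E[g(U)^2U]/\E[U] \leq \E[f(U)^2]/\E[U^2]$ along the way.
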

\begin{proof}
Let $g(\zeta)=f(\zeta)/\zeta$ with $g(0) = \lim_{\zeta\downarrow 0} f(\zeta)/\zeta$. By our assumption, $g(\zeta)$ is also non-negative and non-decreasing.  Let $V$ be an iid copy of $U$, we have that
\begin{eqnarray}
\E[g(U)^2U]\cdot\E[U^2] - \E[g(U)^2U^2]\cdot\E[U]
&=& \frac{1}{2}\Big(\E[g(U)^2UV^2+g(V)^2U^2V]\nn\\
    &&- \E[g(U)^2U^2V + g(V)^2V^2U]\Big)\nn\\
&=& \frac{1}{2}\E\Big[UV(V-U)\big(g(U)^2-g(V)^2\big)\Big]\nn\\
&\leq& 0,
\end{eqnarray}
where the final inequality results from the monotonicity of $g$. Therefore we have shown
\begin{equation}
    \label{eq:lemma:variance-intermediate}
    \frac{\E[g(U)^2U]}{\E[U]} \leq \frac{\E[g(U)^2U^2]}{\E[U^2]} = \frac{\E[f(U)^2]}{\E[U^2]}.
\end{equation}
Thus there is
\begin{eqnarray}
    \label{eq:lemma:variance-final}
    \frac{\E[f(U)]^2}{\E[U]^2}
    &=& \frac{\E[g(U)U]^2}{\E[U]^2}\nn\\
    &\overset{\add\thecount\label{7}}{\leq}& \frac{\E[g(U)^2U]\E[U]}{\E[U]^2}\nn\\
    &=& \frac{\E[g(U)^2U]}{\E[U]}\nn\\
    &\overset{\add\thecount\label{8}}{\leq}& \frac{\E[f(U)^2]}{\E[U^2]},
\end{eqnarray}
where $\ref{7}$ comes from Cauchy-Schwarz inequality and $\ref{8}$ is the consequence of \eqref{eq:lemma:variance-intermediate}. 
Finally, notice that $\Var[f(U)] = \E[f(U)^2] - \E[f(U)]^2$ and $\Var[U] = \E[U^2] - \E[U]^2$, we have
\begin{eqnarray}
&&\frac{\E[f(U)]^2}{\E[U]^2} \leq \frac{\Var[f(U)]}{\Var[U]}\nn\\
&\Leftrightarrow& \E[f(U)]^2\Var[U] \leq \E[U]^2\Var[f(U)]\nn\\
&\Leftrightarrow& \E[f(U)]^2\big( \E[U^2] - \E[U]^2 \big) \leq \E[U]^2 \big( \E[f(U)^2] - \E[f(U)]^2 \big)\nn\\
&\Leftrightarrow& \E[f(U)]^2\E[U^2] \leq \E[U]^2 \E[f(U)^2],
\end{eqnarray}
where the final inequality is implied by \eqref{eq:lemma:variance-final}. Hence the proof is complete.
\end{proof}

We define function $\gbbl(\zeta)$ by
\[
\gbbl(\zeta) = 
\begin{cases}
\gbl(\zeta) &\zeta\in[0,\zbl]\\
\frac{\gbl(\zbl)}{\zbl}\cdot \zeta & \zeta\in[\zbl,1+\l]
\end{cases},
\]
as is shown in Figure \ref{fig:functions2}. 
\begin{figure}
    \centering
    \includegraphics[width=200pt]{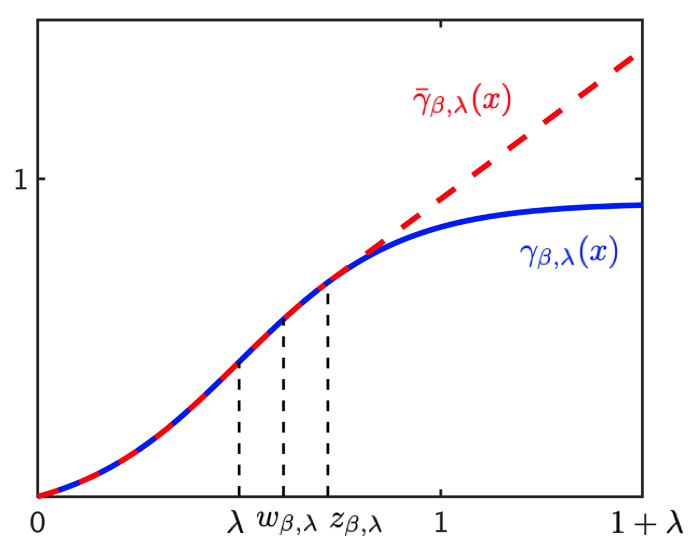}
    \caption{Functions $\gbl$ and $\gbbl$.}
    \label{fig:functions2}
\end{figure}
We thus have
\begin{eqnarray}
    \label{eq:proposition3-1}
    &&\E\Big[ \big( \gbY(\s(X) - X^\T \hY) - \nb(X) \big)^2 \ind(X\in\C{D}) \Big]\nn\\
    &\geq& \chi^2\cdot\E\Big[ \big( \gbbY(\s(X)-X^\T \hY) - \nb(X) \big)^2 \ind(X\in\C{D}) \Big]\nn\\
    &\overset{\add\thecount\label{9}}{\geq}& \chi^2\cdot \E \Big[ \Var\big( \gbbY(\s(X)-X^\T \hY)\big| X \big) \ind(X\in\C{D}) \Big]\nn\\
    &=&  \chi^2\cdot \E \Big[ \Var\big( X^\T \hY \big| X \big) Q(X)^2 \ind(X\in\C{D}) \Big]\nn\\
    &=& \chi^2\cdot \E\Big[ \E\big[ \big(X^\T (\hY - \tY)\big)^2\big| X\big] Q(X)^2 \ind(X\in\C{D}) \Big]\nn\\
    &=& \chi^2\cdot \E\Big[ \Big( \big(Q(X)\ind(X\in\C{D})X\big)^\T(\hY - \tY) \Big)^2 \Big]\nn\\
    &\overset{\add\thecount\label{10}}{\geq}& \frac{\chi^2}{d}\cdot \E\Big[  \big(Q(X)\ind(X\in\C{D})X\big)^\T (Y - \hY) \Big]^2\nn\\
    &=& \frac{\chi^2}{d}\cdot \E\Big[  (X^\T Y-X^\T \hY) Q(X) \cdot \ind(X\in\C{D}) \Big]^2\nn\\
    &\overset{\add\thecount\label{11}}{\geq}& \frac{\chi^2}{d}\cdot \E\Big[  \gbbY(X^\T Y-X^\T \hY) \cdot \ind(X\in\C{D}) \Big]^2\nn\\
    &\overset{\add\thecount\label{12}}{\geq}& \frac{\chi^2}{d}\cdot \E\Big[  \gbY(X^\T Y-X^\T \hY) \cdot \ind(X\in\C{D}) \Big]^2\nn\\
    &\overset{\add\thecount\label{13}}{=}& \frac{\chi^2}{d}\cdot \E\Big[  \gbY(\s(X)-X^\T \hY) \cdot \ind(X\in\C{D}) \Big]^2,
\end{eqnarray}
where
\[
    \chi := \inf_{x\in\C{D}, y\in\Th} \frac{\gby(\s(x)-x^\T y)}{\gbby(\s(x)-x^\T y)},
\]
and
\[
    Q(x)^2 := \frac{\Var\big( \gbby(\s(x)- x^\T \hY)\big)}{\Var\big( x^\T \hY\big)}.
\]
In $\ref{9}$, we apply the fact that for any random variable $W$ with $\E[W^2]<\infty$ and constant $a$, there is 
\[
    \E[(W-a)^2] \geq \Var[W].
\]
In $\ref{10}$ we use the result in Lemma \ref{lemma:gbb}. In $\ref{11}$, we use the fact that
\begin{eqnarray}
    Q(x)^2 
    &=& \frac{\Var\big( \gbby(\s(x)-x^\T \hY)\big)}{\Var\big( \s(x) - x^\T \hY\big)} \nn\\
    &\geq& \frac{\E\big[ \gbby(\s(x)-x^\T \hY)\big]^2}{\E\big[ \s(x) - x^\T \hY\big]^2} \nn\\
    &=& \frac{\E\big[ \gbby(x^\T \a^{-1}(x)-x^\T \hY)\big]^2}{\E\big[ x^\T \a^{-1}(x) - x^\T \hY\big]^2}.
\end{eqnarray}
Step $\ref{12}$ follows from that $\gbbY\geq\gbY$, and the final step follows trivially from $\s(X) = X^\T \a^{-1}(X) = X^\T Y$.
Hence we can set $C_1=\frac{d}{\chi^2}$.\par
\begin{figure}
    \centering
    \includegraphics[width=200pt]{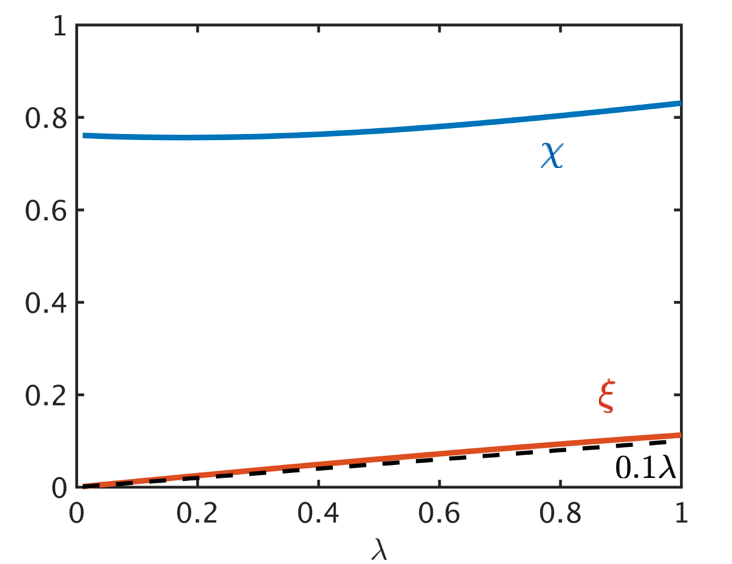}
    \caption{Constants $\chi$ and $\xi$ for $\b=2$.}
    \label{fig:constants2}
\end{figure}
Next we turn to constant $C_2$. We have that
\begin{eqnarray}
    \label{eq:proposition3-2}
    && \E\Big[ \big( \gbY(\s(X) - X^\T \hY) - \nb(X) \big)^2 \ind(X\in\bar{\C{D}}) \Big]\nn\\
    &\geq& \E\Big[ \big( \gbY(\s(X) - X^\T \hY) - \nb(X) \big)^2 \ind(\s(X) - X^\T \hY \leq \wbY)\ind(X\in\bar{\C{D}}) \Big]\nn\\ 
    &\geq& \E\Big[ \big( \gbY(\s(X) - X^\T \hY) - \nb(X) \big)^2 \ind(X^\T \hY \geq 0)\ind(\hY\in\a^{-1}(\bar{\C{D}}))\ind(X\in\bar{\C{D}}) \Big]\nn\\ 
    &\geq& \xi^2 \cdot \E\Big[ \ind(X^\T \hY\geq 0) \ind(\hY\in\a^{-1}(\bar{\C{D}}))\ind(X\in\bar{\C{D}})\Big]\nn\\
    &\overset{\add\thecount\label{14}}{\geq}& \frac{\xi^2}{2\eta}\cdot \E\Big[ \ind(\hY\in\a^{-1}(\bar{\C{D}})) \ind(X\in\bar{\C{D}})\Big]\nn\\
    &=& \frac{\xi^2}{2\eta}\cdot \E\Big[\ind(X\in\bar{\C{D}})\Big]^2\nn\\
    &\geq& \frac{\xi^2}{2\eta}\cdot  \E\Big[ \gbY(\s(X)-X^\T \hY) \cdot\ind(X\in\bar{\C{D}})\Big]^2,
\end{eqnarray}
with
\[
    \xi^2 := \inf_{x\in\bar{\C{D}}, y\in\Th} \big( \gby(\s(x) - x^\T y) - \nb(x)\big)^2,
\]
and $\ref{14}$ comes from Corollary \ref{cor:strongerTuran}. 
Thus we can set $C_2 = \frac{2\eta}{\xi^2}$.\par 
Finally, when $\b\geq 2$, we have that $\chi > \xi > 0.1\D$. Therefore
\begin{equation}
    \label{eq:DeltaIsNotOne}
    \G_t(\C{L}_\b, \pi^\TS) \leq 100\D^{-2} \eta.
\end{equation}
The values of the constants are plotted in Figure \ref{fig:constants2}. 
By combining \eqref{eq:DeltaIsNotOne} with \eqref{eq:cor:smallBeta}, we arrive at \eqref{eq:infoRatioBound}.
\newpage

\section{Proof of Proposition \ref{prop:NoSublinear}}
\label{sec:proof_of_prop_NoSublinear}
Suppose that for each $a\in\A$,
\[
    \E[R(a)|A^*=a] \geq 1-\d,
\]
and 
\[
    \max_{a'\neq a} \E[R(a')|A^*= a] \leq \d.
\]
Let $(\h{a}_1,\dots,\h{a}_t)$ be any deterministic action sequence up to stage $t$. Then conditioned on $A_1=\h{a}_1\dots A_t=\h{a}_t$, we have that $R_1,\dots,R_t$ are mutually independent. Hence 
\begin{eqnarray}
    \label{eq:ProbabilityOfZeros}
    &&\P\big( R_1=\dots=R_t=0\big| A_1=\h{a}_1,\dots,A_t=\h{a}_t \big)\nn\\
    &\geq& \P\big( R_1=\dots=R_t=0\big| A_1=\h{a}_1,\dots,A_t=\h{a}_t, A^*\notin\{\h{a}_1,\dots,\h{a}_t\} \big)\cdot \P\big(A^*\notin\{\h{a}_1,\dots,\h{a}_t\} \big)\nn\\
    &=& \left(\prod_{j=1}^t \P\big( R_j=0 \big| A_j=\h{a}_j, A^*\neq A_j \big) \right)\cdot \P\big(A^*\notin\{\h{a}_1,\dots,\h{a}_t\} \big)\nn\\
    &\geq& \big(1-\d\big)^t \cdot \left(1-\frac{t}{N}\right),
\end{eqnarray}
where in the final step we use the fact that the prior of $A^*$ is uniform. Let $\C{E}_t$ be the event $\{R_1=\dots=R_t=0\}$. Since \eqref{eq:ProbabilityOfZeros} holds for every action sequence, we have that for any policy $\pi$, 
\[
    \P( \C{E}_t ) \geq \big(1-\d\big)^t \cdot \left(1-\frac{t}{N}\right).
\]
Thus
\begin{eqnarray}
    \RM{BayesRegret}(t;\C{L},\pi)
    &=& \sum_{j=1}^t\E[R^* - R_j]\nn\\
    &=& t\E[R^*] - \sum_{j=1}^t\E[R_j|\C{E}_t] \P(\C{E}_t) - \sum_{j=1}^t\E[R_j|\bar{\C{E}}_t] \P(\bar{\C{E}}_t)\nn\\
    &\geq& (1-\d)t - \left[1-\big(1-\d\big)^t \cdot \left(1-\frac{t}{N}\right)\right] t\nn\\
    &=& \left[(1-\d)^{t} \frac{(N-t)}{N} - \d\right] t.
\end{eqnarray}
Let $\d=1/N$, we have that for $t\leq \frac{N}{2}-1$,
\[
    \RM{BayesRegret}(t;\C{L},\pi)\geq \frac{t}{2\sqrt{e}} \geq \frac{t}{4}.
\]
\newpage

\section{Upper Bounds of Fragility Dimension}
\label{sec:bounds_of_DOI}
In this section we give worst-case bounds of fragility dimension with respect to the problem dimension $d$. Let $\C{X}$ and $\C{Y}$ be two subsets of $\BF{B}_d$, and let $f^*:\C{Y}\mapsto\C{X}$ be such that
\[
    f^*(y)^\T y = \max_{x\in\C{X}} x^\T y,\quad \forall y\in\C{Y}.
\]
Further we define $\iota=\inf_{y\in \C{Y}} f^*(y)$. Here $\iota$ can be interpreted as the constant $\D$ in Assumption \ref{assp:main}. We will show that the worst-case bounds vary across the three regimes $\iota=1$, $\iota\in(0,1)$ and $\iota=0$.

\subsection{The Regime $\iota=1$}
\label{sec:bounds_of_DOI_linearRegime}
When $\iota=1$ since we are constraining $\C{X}$ and $\C{Y}$ to be contained in the unit ball, there must be that $f^*(y)=y$ for each $y\in\C{Y}$. Therefore $\eta(\C{X},\C{Y})$ is equal to the maximum integer $M$, such that there exists $\{y_1,\dots, y_M\}\subseteq \C{Y}$, with 
\[
    y_i^\T y_j < 0,\quad \forall i,j\in[M], i\neq j.
\]
The following lemma immediately implies that in this case $\eta(\C{X},\C{Y})\leq d+1$.
\begin{lemma}
\label{lem:linear}
In the $d$-dimensional Euclidean space, there exists at most $d+1$ different vectors, such that the inner-product between any pair of different vectors is negative.
\end{lemma}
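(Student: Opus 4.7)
The plan is to proceed by induction on the ambient dimension $d$. The base case $d=1$ is immediate: three nonzero real numbers cannot be pairwise of opposite sign, since by pigeonhole two of them must share a sign and therefore have positive product. So in $\mathbb{R}^1$ at most $2 = d+1$ vectors can have pairwise negative inner products.

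For the inductive step, suppose the claim holds in $\mathbb{R}^{d-1}$ and let $v_1,\dots,v_n \in \mathbb{R}^d$ satisfy $v_i^\top v_j < 0$ for all $i \neq j$. The key idea is to ``eliminate'' $v_n$ by projecting $v_1,\dots,v_{n-1}$ onto the hyperplane orthogonal to $v_n$ and showing that the pairwise-negative property is preserved. Note first that $v_n \neq 0$ (else $v_n^\top v_1 = 0 \not< 0$), so the projection
\[
    w_i \;=\; v_i - \frac{v_i^\top v_n}{\|v_n\|^2}\, v_n, \qquad i = 1,\dots,n-1,
\]
is well-defined and lives in a $(d-1)$-dimensional subspace. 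I then compute
\[
    w_i^\top w_j \;=\; v_i^\top v_j - \frac{(v_i^\top v_n)(v_j^\top v_n)}{\|v_n\|^2}.
\]
Since $v_i^\top v_n < 0$ and $v_j^\top v_n < 0$, the numerator of the subtracted fraction is strictly positive, so $w_i^\top w_j < v_i^\top v_j < 0$ for every $i \neq j$ in $\{1,\dots,n-1\}$. Hence the projected vectors form a collection of $n-1$ vectors in a $(d-1)$-dimensional Euclidean space with pairwise negative inner products, and the inductive hypothesis gives $n-1 \leq d$, i.e.\ $n \leq d+1$, completing the induction.

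There is essentially no hard step: once one realizes that projecting out one vector both reduces the dimension by one and makes the remaining inner products even more negative, the induction carries itself. The only minor subtlety is checking that the projection is nondegenerate, i.e.\ that $v_n \neq 0$ and that $w_1,\dots,w_{n-1}$ are genuinely distinct (they must be, since $w_i = w_j$ would force $v_i - v_j$ parallel to $v_n$, and combined with $v_i^\top v_j < 0$ this quickly yields a contradiction or else simply means that two of the original vectors coincided, which is excluded); this is a routine check and not an obstacle.
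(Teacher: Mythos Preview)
Your proof is correct and takes a genuinely different route from the paper. The paper argues by contradiction without induction: assuming $d+2$ such vectors exist, it stacks them into a $d\times(d+2)$ matrix, picks a nullspace vector $z$ with both positive and negative entries (possible since the nullspace has dimension at least $2$), splits the resulting linear relation into positive-coefficient and negative-coefficient parts, and then computes that a certain squared norm is strictly negative --- a contradiction driven by the cross terms $x_i^\top x_j<0$.

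Your approach instead inducts on $d$ by projecting all but one vector onto the hyperplane orthogonal to $v_n$; the key computation $w_i^\top w_j = v_i^\top v_j - (v_i^\top v_n)(v_j^\top v_n)/\|v_n\|^2 < v_i^\top v_j < 0$ is exactly right and does all the work. One small simplification: the distinctness of the $w_i$ that you flag as a ``routine check'' is in fact automatic, since $w_i=w_j$ would give $w_i^\top w_j=\|w_i\|^2\ge 0$, contradicting $w_i^\top w_j<0$; so no separate argument is needed. Both proofs are standard and of comparable length; yours is more geometric and self-contained, while the paper's is a one-shot linear-algebra trick that avoids induction.
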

\begin{proof}
Suppose that there exists a set $\C{X}$ which consists of $d+2$ different vectors $x_1, \dots x_{d+2}$, such that $x_i^\T x_j<0$ for any $1\leq i<j\leq d+2$. Let 
\[
U=
\begin{bmatrix}
x_1 &x_2 &\cdots &x_{d+2}
\end{bmatrix}.
\]
Then the nullspace of $U$ has dimension at least 2. Therefore we can find $z\in\mathbf{null}(U)\subset \BB{R}^{d+2}$, such that $z$ has at least one positive entry and one negative entry. Without loss of generality, we have that
\[
z_1x_1+z_2x_2+\dots + z_{k}x_k = -z_{\ell}x_\ell - z_{\ell+1}x_{\ell+1} -\dots -z_{d+2}x_{d+2},
\]
where $1<k<\ell<d+2$ and $z_1,\dots z_k>0$, $z_\ell\dots z_{d+2}<0$. However, this gives
\begin{eqnarray*}
&&\|z_1x_1+z_2x_2+\dots + z_{k}x_k\|_2^2\nn\\
&=& (z_1x_1+z_2x_2+\dots + z_{k}x_k)^\T(z_1x_1+z_2x_2+\dots + z_{k}x_k)\\
&=& -(z_1x_1+z_2x_2+\dots + z_{k}x_k)^\T(z_{\ell}x_\ell + z_{\ell+1}x_{\ell+1} +\dots +z_{d+2}x_{d+2})\\
&=& -\sum_{i=1}^k\sum_{j=\ell}^{d+2} z_iz_j x_i^\T x_j <0,
\end{eqnarray*}
which is a contradiction.
\end{proof}

\subsection{The Regime $\iota=0$}
\label{sec:bounds_of_DOI_0}
We show by an example for $d=3$ that when $\iota=0$, the fragility dimension can be arbitrarily large. Let $h, r\in(0,1)$ be constants to be determined later. Consider $\C{X}=\{x_1,\dots, x_N\}$ and $\C{Y}=\{y_1,\dots, y_N\}$ where
\[
    x_i = 
    \begin{pmatrix}
    r\cdot\cos\left(\frac{2\pi}{N}\cdot i\right), &
    r\cdot\sin\left(\frac{2\pi}{N}\cdot i\right), &
    \sqrt{1-r^2}
    \end{pmatrix},\quad i=1,\dots, N,
\]
and
\[
    y_i = 
    \begin{pmatrix}
    h\cdot\cos\left(\frac{2\pi}{N}\cdot i\right), &
    h\cdot\sin\left(\frac{2\pi}{N}\cdot i\right), &
    -\sqrt{1-h^2}
    \end{pmatrix},\quad i=1,\dots, N, 
\]
as is shown in Figure \ref{fig:counterexample}.
\begin{figure}
    \centering
    \includegraphics[width=150pt]{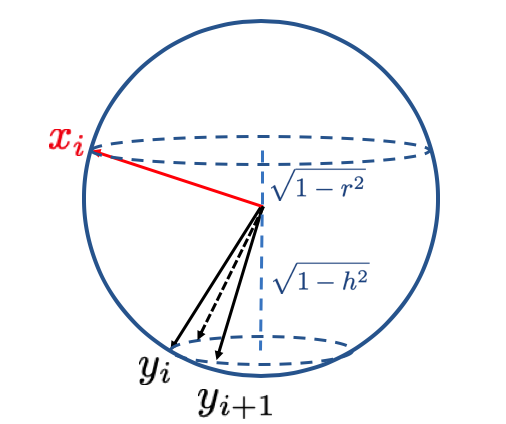}
    \caption{The example that $\eta(\C{X},\C{Y})$ is arbitrarily large.} 
    \label{fig:counterexample}
\end{figure}
We have that $f^*(y_i) = x_i$ and 
\[
    x_k^\T y_\ell = hr\cdot\cos\left(\frac{2\pi}{N}\cdot (k-\ell)\right) - \sqrt{(1-h^2)(1-r^2)}.
\]
To satisfy that $x_k^\T y_\ell < 0$ for all $k\neq \ell$, we only have to choose $h$ and $r$ such that
\[
    hr\cdot \cos\left(\frac{2\pi}{N}\right) - \sqrt{(1-h^2)(1-r^2)} < 0 < hr - \sqrt{(1-h^2)(1-r^2)}.
\]
This can be done by arbitrarily choosing $h$ and let $r = \sqrt{1-\g h^2}$ with
\[
    \frac{\cos^2\left(\frac{2\pi}{N}\right)}{1-\sin^2\left(\frac{2\pi}{N}\right)h^2} < \g < 1.
\]
Notice that $N$ can be arbitrarily large since $\iota=0$. Thus $\eta(\C{X},\C{Y})$ is unbounded.

\subsection{The Regime $\iota\in(0,1)$}
In this section we show that when $\iota\in(0,1)$, the worst-case fragility dimension grows exponentially with $d$. 
We first introduce the following result. We point readers to \cite{boroczky2004finite} for a detailed discussion.
\begin{fact}
    \label{fact:covering}
    For any $\e\in(0,1)$, there exists $\g>1$, such that for all integer $d\geq 3$, there exist $\g^d$ vectors in $\BF{S}_{d-1}$ such that the inner product of any two different vectors is at most $\e$.
\end{fact}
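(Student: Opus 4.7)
The plan is to recognize Fact~\ref{fact:covering} as a classical spherical packing statement and derive it from a volumetric greedy argument. Since the condition $x_i^\T x_j \leq \e$ for unit vectors is equivalent to angular separation at least $\theta_0 := \arccos(\e) > 0$, I am simply seeking a large $\theta_0$-separated set on $\BF{S}_{d-1}$, of cardinality exponential in $d$.

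First I would construct, greedily, a maximal set $S = \{v_1,\ldots,v_N\} \subseteq \BF{S}_{d-1}$ satisfying $v_i^\T v_j \leq \e$ for all $i \neq j$. By maximality, for every $w \in \BF{S}_{d-1}$ there must exist some $v_i \in S$ with $w^\T v_i > \e$; otherwise $w$ could be adjoined to $S$. Consequently the spherical caps
\[
C_i := \{w \in \BF{S}_{d-1} : w^\T v_i > \e\}, \quad i = 1,\ldots,N,
\]
cover $\BF{S}_{d-1}$. Writing $\mu$ for the normalized uniform measure on the sphere, this immediately yields $N \cdot \max_i \mu(C_i) \geq 1$.

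Second, I would invoke the standard spherical cap estimate $\mu(C_i) \leq (1-\e^2)^{(d-1)/2}$, which can be derived either from a direct integration of the surface element in spherical coordinates or, more slickly, from the Gaussian representation $W = Z/\|Z\|_2$ with $Z \sim N(0,I_d)$. This gives $N \geq (1-\e^2)^{-(d-1)/2}$. To finish, I would set $\g := (1-\e^2)^{-1/3}$, which is strictly greater than $1$ because $\e \in (0,1)$. Since $d \geq 3$ implies $(d-1)/2 \geq d/3$, I then obtain $N \geq (1-\e^2)^{-(d-1)/2} \geq \g^d$, as required.

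The only delicate step is the spherical cap estimate, which needs some care with the incomplete-beta integral to control the exponent uniformly in $d$. However, this is completely standard material (see \cite{boroczky2004finite}), so in the write-up I would simply cite it and devote the bulk of the argument to the short greedy-covering reduction above. I expect no surprises: the entire statement is really a packaging of the elementary fact that spherical caps have measure decaying exponentially in $d$ at any fixed angular radius less than $\pi/2$.
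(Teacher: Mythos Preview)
The paper does not actually prove Fact~\ref{fact:covering}; it is stated as a known result with a pointer to \cite{boroczky2004finite} and nothing more. Your proposal therefore goes strictly beyond what the paper does: you supply a short, standard argument (maximal $\e$-separated set, caps cover the sphere, exponential cap-volume bound) that the paper simply outsources to the reference.

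Your argument is correct. The greedy covering step is routine, and the cap estimate $\mu(\{w:\ w^\T v\ge\e\})\le(1-\e^2)^{(d-1)/2}$ is indeed a classical bound (in fact one has the slightly sharper $\tfrac12(1-\e^2)^{(d-1)/2}$, but you do not need it). Your algebra at the end is also fine: $(d-1)/2\ge d/3$ holds precisely for $d\ge 3$, so $\g=(1-\e^2)^{-1/3}>1$ gives $N\ge\g^d$ uniformly. The only thing I would flag is presentational: since you already plan to cite \cite{boroczky2004finite} for the cap bound, your write-up and the paper's treatment effectively converge---both ultimately rest on the same reference---so the added value of spelling out the greedy reduction is modest but nonzero.
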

For any fixed $d$, let $u,v\in\left(0, \frac{\pi}{2}\right)$ and $\e>0$ be constants to be determined later. 
Let $z_1,\dots, z_N\in\BF{S}_{d-2}$ be such that
\[
    z_i^\T z_j < \e, \quad \forall j,k\in[N], j\neq k.
\]
Consider the pair of sets $\C{X},\C{Y}\subset\BF{S}_{d-1}$ defined by
\[
    \C{X}:=\left\{ x_i\right\}_{i=1}^N,\quad x_i = (\cos u, \sin u\cdot z_i),
\]
and 
\[
    \C{Y}:=\left\{ y_i \right\}_{i=1}^N,\quad y_i = (-\cos v, \sin v\cdot z_i).
\]
Thus we have
\[
    x_i^\T y_i = -\cos u\cos v + \sin u\sin v = -\cos(u+v),\quad i\in[N]
\]
and
\[
    x_j^\T y_k = -\cos u\cos v + z_j^\T z_k\sin u\sin v < -\cos(u+v) - (1-\e)\sin u\sin v,\quad j,k\in[N], j\neq k.
\]
There is obviously $f^*(y_i)=x_i$. 
In order to satisfy $\inf_{y\in\C{Y}} f^*(y)^\T y = \iota$, we only have to choose $u,v,\e$ such that  
\[
    \cos(u+v) \leq -\iota,
\]
and
\[
    \cos(u+v) + (1-\e)\sin u\sin v \geq 0.
\]
This can be done by setting
\[
    u = v = \frac{1}{2}\arccos(-\iota),\ \e = \frac{1-\iota}{1+\iota}.
\]
Since $\iota\in(0,1)$, we have that $\e\in(0,1)$. From Fact \ref{fact:covering}, there exists $\g>1$ such that $N\geq \g^{d-1}$.

\subsection{Removing Actions Could Make Problem Harder}
\label{sec:harder}
Let $\C{X}$ and $\C{Y}$ be the two sets given in the example in Appendix \ref{sec:bounds_of_DOI_0}. Let the parameter set be $\Th=\C{Y}$ and consider action sets $\A_1=\C{X}\cup\C{Y}$ and $\A_2=\C{X}$. Obviously $\A_2\subset\A_1$. However, we argue that the problem $\C{L}_1$ with action and parameter sets $(\A_1, \Th)$ is easier than the problem $\C{L}_2$ with sets $(\A_2, \Th)$.\par
In fact, from Lemma \ref{lem:linear}, we have that $\eta(\A_1,\Th)\leq 4$. However, the argument in Appendix \ref{sec:bounds_of_DOI_0} shows that $\eta(\A_1,\Th)=N$, where $N$ is the size of the parameter set. Therefore the regret of Thompson sampling on $\C{L}_1$ can be bounded by the result in Theorem \ref{thm:Clean}, which is independent of $\b$. However, to learn $\C{L}_2$ for a large $\b$, we almost have to try every action to find the optimal one. Therefore, somewhat surprisingly, reducing the size of the action set can actually make the problem harder.
\end{document}
